\newtheorem{definition}{Definition}
\newtheorem{proposition}{Proposition}
\newtheorem{theorem}{Theorem}
\newtheorem{corollary}{Corollary}
\theoremstyle{definition}
\newtheorem*{remark}{Remark}
\icmltitlerunning{Bayesian Differential Privacy for Machine Learning}
\begin{document}

\twocolumn[
\icmltitle{Bayesian Differential Privacy for Machine Learning}



\icmlsetsymbol{equal}{*}

\begin{icmlauthorlist}
\icmlauthor{Aleksei Triastcyn}{epfl}
\icmlauthor{Boi Faltings}{epfl}
\end{icmlauthorlist}

\icmlaffiliation{epfl}{Artificial Intelligence Lab, Ecole Polytechnique F\'ed\'erale de Lausanne (EPFL), Lausanne, Switzerland}

\icmlcorrespondingauthor{Aleksei Triastcyn}{aleksei.triastcyn@epfl.ch}

\icmlkeywords{Machine Learning, Deep Learning, Differential Privacy, Privacy, Privacy Accounting}

\vskip 0.3in
]



\printAffiliationsAndNotice{}  

\begin{abstract}
Traditional differential privacy is independent of the data distribution. However, this is not well-matched with the modern machine learning context, where models are trained on specific data. As a result, achieving meaningful privacy guarantees in ML often excessively reduces accuracy. We propose \emph{Bayesian differential privacy (BDP)}, which takes into account the data distribution to provide more practical privacy guarantees. We also derive a general privacy accounting method under BDP, building upon the well-known moments accountant. Our experiments demonstrate that in-distribution samples in classic machine learning datasets, such as MNIST and CIFAR-10, enjoy significantly stronger privacy guarantees than postulated by DP, while models maintain high classification accuracy.
\end{abstract}

\section{Introduction}
\label{sec:introduction}
Machine learning (ML) and data analytics offer vast opportunities for companies, governments and individuals to take advantage of the accumulated data. However, their ability to capture fine levels of detail can compromise privacy of data providers. Recent research~\cite{fredrikson2015model, shokri2017membership, hitaj2017deep} suggests that even in a black-box setting it is possible to infer information about individual records in the training set.

Numerous solutions have been proposed to address this problem, varying in the extent of data protection and how it is achieved. In this work, we consider a notion that is viewed by many researchers as the gold standard -- \emph{differential privacy (DP)}~\cite{dwork2006}. Initially, DP algorithms focused on sanitising simple statistics, but in recent years, it made its way to machine learning~\cite{abadi2016deep,papernot2016semi,papernot2018scalable,mcmahan2018learning}.

Despite notable advances, differentially private ML still suffers from two major problems: (a) utility loss due to excessive noise added during training and (b) difficulty in interpreting the privacy parameters $\varepsilon$ and $\delta$. In many cases where the first problem appears to be solved, it is actually being hidden by the second. We design a motivational example in Section~\ref{sec:motivation} that illustrates how a seemingly strong privacy guarantee allows for the attacker accuracy to be as high as $99\%$. Although this guarantee is very pessimistic and holds against a powerful adversary with any auxiliary information, it can hardly be viewed as a reassurance to a user. Moreover, it provides only the worst-case bound, leaving users to wonder how far is it from a typical case.

In this paper, we focus on practicality of privacy guarantees and propose a variation of DP that provides more meaningful guarantees for \emph{typical} scenarios on top of the global DP guarantee. We name it \emph{Bayesian differential privacy (BDP)}.

The key to our privacy notion is the definition of \emph{typical} scenarios. We observe that machine learning models are designed and tuned for a particular data distribution (for example, an MRI dataset is very unlikely to contain a picture of a car). Moreover, such prior distribution of data is often already available to the attacker. Thus, we consider a scenario \emph{typical} when all sensitive data is drawn from the same distribution. While the traditional differential privacy treats all data as equally likely and hides differences by large amounts of noise, Bayesian DP calibrates noise to the data distribution and provides much tighter expected guarantees.

As the data distribution is usually \emph{unknown}, BDP estimates the necessary statistics from data as shown in the following sections. Furthermore, since typical scenarios are determined by data, the participants of the dataset are covered by the BDP guarantee with high probability.

To accompany the notion of Bayesian DP (Section~\ref{sec:bdp_definition}), we provide its theoretical analysis and the privacy accounting framework (Section~\ref{sec:accounting}). The latter considers the privacy loss random variable and employs principled tools from probability theory to find concentration bounds on it. It provides a clean derivation of privacy accounting in general (Sections~\ref{sec:accounting} and~\ref{sec:privacy_cost_estimator}), as well as in the special case of subsampled Gaussian noise mechanism. Moreover, we show that it is a generalisation of the well-known moments accountant (MA)~\cite{abadi2016deep} (Section~\ref{sec:relation_to_ma_rdp}).

Since our privacy accounting relies on data distribution samples, a natural concern is that the data not present in the dataset are not taken into account, and thus, are not protected. However, our finite sample estimator is specifically designed to address this issue (see Section~\ref{sec:privacy_cost_estimator}).

Our contributions in this paper are the following:
\begin{itemize}[nolistsep,noitemsep]
\item we propose a variation of DP, called Bayesian differential privacy, that allows to provide more practical privacy guarantees in a wide range of scenarios;
\item we derive a clean, principled privacy accounting method that generalises the moments accountant;
\item we experimentally demonstrate advantages of our method (Section~\ref{sec:evaluation}), including the state-of-the-art privacy bounds in deep learning (Section~\ref{sec:deep_learning}).
\end{itemize}

\section{Related Work}
\label{sec:related_work}
With machine learning applications becoming more and more ubiquitous, vulnerabilities and attacks on ML models get discovered, raising the need for matching defences. These attacks can be based on both passive adversaries, such as model inversion~\cite{fredrikson2015model} and membership inference~\cite{shokri2017membership}, and active adversaries (for example,~\cite{hitaj2017deep}).

One of the strongest privacy standards that can be employed to protect ML models from these and other attacks is differential privacy~\cite{dwork2006,dwork2006calibrating}. Pure $\varepsilon$-DP is hard to achieve in many realistic learning settings, and therefore, a notion of approximate $(\varepsilon, \delta)$-DP is used across-the-board in machine learning. It is typically accomplished by applying the Gaussian noise mechanism~\cite{dwork2014algorithmic} during the gradient descent update~\cite{abadi2016deep}. Privacy accounting, i.e. computing the privacy guarantee throughout multiple iterations of the algorithm, is typically done by the \emph{moments accountant (MA)}~\cite{abadi2016deep}. In Section~\ref{sec:relation_to_ma_rdp}, we discuss the link between MA and our accounting method, as well as connection to a closely related notion of R\'enyi DP~\cite{mironov2017renyi}. Similarly, a link can be established to concentrated DP definitions~\cite{dwork2016concentrated,bun2016concentrated}.

A number of previous relaxations considered a similar idea of limiting the scope of protected data or using the data generating distribution, either through imposing a set of data evolution scenarios~\cite{kifer2014pufferfish}, policies~\cite{he2014blowfish}, distributions~\cite{blum2013learning,bhaskar2011noiseless}, or families of distributions~\cite{bassily2013coupled,bassily2016typical}. Some of these definitions (e.g.~\cite{blum2013learning}) may require more noise, because they are stronger than DP in the sense that datasets can differ in more than one data point. This is not the case with our definition: like DP, it considers adjacent datasets \emph{differing in a single data point}. The major problem of such definitions, however, is that in real-world scenarios it is not feasible to exactly define distributions or families of distributions that generate data. And even if this problem is solved by restricting the query functions to enable the usage of the central limit theorem (e.g.~\cite{bhaskar2011noiseless,duan2009privacy}), these guarantees would only hold asymptotically and may require prohibitively large batch sizes. While Bayesian DP can be seen as a special case of some of the above definitions, the crucial difference with the prior work is that our additional assumptions allow the Bayesian accounting (Sections~\ref{sec:accounting},~\ref{sec:privacy_cost_estimator}) to provide guarantees w.h.p. with finite number of samples from data distributions, and hence, enable a broad range of real-world applications.

Finally, there are other approaches that use the data distribution information in one way or another, and coincidentally share the same~\cite{yang2015bayesian} or similar~\cite{leung2012bayesian} names. Yet, similarly to the methods discussed above, their assumptions (e.g. bounds on the minimum probability of a data point) and implementation requirements (e.g. potentially constructing correlation matrices for millions of data samples) make practical applications very difficult. Perhaps, the most similar to our approach is random differential privacy~\cite{hall2011random}. The main difference is that \citet{hall2011random} consider the probability space over all data points, while we only consider the space over a single differing example. As a result, our guarantees are more practical to compute for large, realistic ML datasets. Furthermore, \citet{hall2011random} only propose a basic composition theorem, which is not tight enough for accounting in iterative methods, and to the best of our knowledge, there are no proofs for other crucial properties, such as post-processing and group privacy. 

\section{Motivation}
\label{sec:motivation}
Before we proceed, we find it important to motivate research on alternative privacy definitions, as opposed to fully concentrating on new mechanisms for DP. On the one hand, there is always a combination of data and a desired statistic that would yield large privacy loss in DP paradigm, regardless of the mechanism. In other words, there can always be data outliers that are difficult to hide without a large drop in accuracy. On the other hand, we cannot realistically expect companies to sacrifice model quality in favour of privacy. As a result, we get models with impractical worst-case guarantees (as we demonstrate below) without any indication of what is the privacy guarantee for the majority of users.

Consider the following example. The datasets $D, D'$ consist of income values for residents of a small town. There is one individual $x'$ whose income is orders of magnitude higher than the rest, and whose residency in the town is what the attacker wishes to infer. The attacker observes the mean income $w$ sanitised by a differentially private mechanism with $\varepsilon=\varepsilon_0$ (we consider the stronger, pure DP for simplicity). What we are interested in is the change in the posterior distribution of the attacker after they see the private model compared to prior~\cite{mironov2017renyi,bun2017teaser}.  If the individual is not present in the dataset, the probability of $w$ being above a certain threshold is extremely small. On the contrary, if $x'$ is present, this probability is higher (say it is equal to $r$). The attacker computes the likelihood of the observed value under each of the two assumptions, the corresponding posteriors given a flat prior, and applies a Bayes optimal classifier. The attacker then concludes that the individual is present in the dataset and is a resident.

By the above expression, $r$ can only be $e^{\varepsilon_0}$ times larger than the respective probability without $x'$. However, if the $r e^{-\varepsilon_0}$ is small enough, then the probability $P(A)$ of the attacker's guess being correct is as high as $r / (r + re^{-\varepsilon_0})$, or
\begin{align}
\label{eq:motivation}
	P(A) = \frac{1}{1 + e^{-\varepsilon}}.
\end{align}

To put it in perspective, for a DP algorithm with $\varepsilon = 2$, the upper bound on the accuracy of this attack is as high as $88\%$. For $\varepsilon = 5$, it is $99.33\%$. For $\varepsilon = 10$, $99.995\%$. Importantly, these values of $\varepsilon$ are very common in DP ML literature~\cite{shokri2015privacy,abadi2016deep,papernot2018scalable}, and they can be even higher in real-world deployments\footnote{\url{https://www.macobserver.com/analysis/google-apple-differential-privacy/}}.

This guarantee does not tell us anything other than that this outlier cannot be protected while preserving utility. But what is the guarantee for other residents of the town? Intuitively, it should be much stronger. In the next section, we present a novel DP-based privacy notion. It uses the same privacy mechanism and augments the general DP guarantee with a much tighter guarantee for the expected case, and, by extension, for any percentile of the user/data population.

\section{Bayesian Differential Privacy}
\label{sec:bayes_dp}
In this section, we define \emph{Bayesian differential privacy (BDP)}. We then derive a practical privacy loss accounting method, and discuss its relation to the moments accountant. All the proofs are available in the supplementary material.

\subsection{Definition}
\label{sec:bdp_definition}
Let us define \emph{strong} Bayesian differential privacy (Definition~\ref{def:strong_bayes_dp}) and (\emph{weak}) Bayesian differential privacy (Definition~\ref{def:bayes_dp}). The first provides a better intuition, connection to concentration inequalities, and is being used for privacy accounting. Unfortunately, it may not be closed under post-processing, and therefore, the actual guarantee provided by BDP is stated in Definition~\ref{def:bayes_dp} and mimics the $(\varepsilon, \delta)$-differential privacy~\cite{dwork2014algorithmic}. The reason Definition~\ref{def:strong_bayes_dp} may pose a problem with post-processing is that it does not consider sets of outcomes, and a routine that integrates groups of values into one value could therefore invalidate the guarantee by increasing the probability ratio beyond epsilon. On the other hand, it can still be used for accounting with adaptive composition, because in this context, every next step is conditioned on a single outcome of the previous step. This separation mirrors the moments accountant approach of bounding tails of the privacy loss random variable and converting it to the $(\varepsilon, \delta)$-DP guarantee~\cite{abadi2016deep}, but does so in a more explicit manner.

\begin{definition}[Strong Bayesian Differential Privacy]
\label{def:strong_bayes_dp}
A randomised function $\mathcal{A}: \mathcal{D} \rightarrow \mathcal{R}$ with domain $\mathcal{D}$, range $\mathcal{R}$, and outcome $w=\mathcal{A}(\cdot)$, satisfies $(\varepsilon_\mu, \delta_\mu)$-strong Bayesian differential privacy if for any two adjacent datasets $D, D' \in \mathcal{D}$, \emph{differing in a single data point $x' \sim \mu(x)$}, the following holds:
\begin{align}
\label{eq:strong_bayes_dp}
	\Pr[L_\mathcal{A}(w, D, D') \geq \varepsilon_\mu] \leq \delta_\mu,
\end{align}
where probability is taken over the randomness of the outcome $w$ and the additional example $x'$.
\end{definition}

Here, $L_\mathcal{A}(w, D, D')$ is the privacy loss defined as
\begin{align}
	L_{\mathcal{A}}(w, D, D') = \log\frac{p(w|D)}{p(w|D')},
\end{align}
where $p(w|D)$, $p(w|D')$ are private outcome distributions for corresponding datasets. For brevity, we often omit parameters and denote the privacy loss simply by $L$.

We use the subscript $\mu$ to underline the main difference between the classic DP and Bayesian DP: in the classic definition the probability is taken only over the randomness of the outcome ($w$), while the BDP definition contains two random variables ($w$ and $x'$). Therefore, the privacy parameters $\varepsilon$ and $\delta$ depend on the data distribution $\mu(x)$.

The addition of another random variable yields the change in the meaning of $\delta_\mu$ compared to the $\delta$ of DP. In Bayesian differential privacy, it also accounts for the privacy mechanism failures in the tails of data distributions in addition to the tails of outcome distributions.

\begin{definition}[Bayesian Differential Privacy]
\label{def:bayes_dp}
A randomised function $\mathcal{A}: \mathcal{D} \rightarrow \mathcal{R}$ with domain $\mathcal{D}$ and range $\mathcal{R}$ satisfies $(\varepsilon_\mu, \delta_\mu)$-Bayesian differential privacy if for any two adjacent datasets $D, D' \in \mathcal{D}$, \emph{differing in a single data point $x' \sim \mu(x)$}, and for any set of outcomes $\mathcal{S}$ the following holds:
\begin{align}
\label{eq:bayes_dp}
	\Pr\left[\mathcal{A}(D) \in \mathcal{S} \right] \leq e^{\varepsilon_\mu} \Pr\left[\mathcal{A}(D') \in \mathcal{S} \right] + \delta_\mu.
\end{align}
\end{definition}

\begin{restatable}{proposition}{weakBDP}
\label{prop:weakBDP}
$(\varepsilon_\mu, \delta_\mu)$-strong Bayesian differential privacy implies $(\varepsilon_\mu, \delta_\mu)$-Bayesian differential privacy.
\end{restatable}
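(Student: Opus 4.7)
The plan is to adapt the classical argument that converts a high-probability bound on the privacy loss random variable into an approximate differential privacy statement, inserting one extra expectation over $x' \sim \mu$. The only structural difference between Definition~\ref{def:bayes_dp} and the hypothesis of that classical argument is that the probability in~\eqref{eq:bayes_dp} is taken jointly over $w$ and $x'$; averaging at the right moment is what turns the resulting pointwise bound into the weak BDP inequality of Definition~\ref{def:bayes_dp_approx}.

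Concretely, for a fixed realisation of $x'$ (and hence a fixed adjacent pair $D, D'$), I would partition the outcome space into the typical region
\[
G_{x'} = \{w \in \mathcal{R} : L_{\mathcal{A}}(w, D, D') < \varepsilon_\mu\}
\]
and its complement $B_{x'}$, and, for any measurable $\mathcal{S} \subseteq \mathcal{R}$, decompose
\[
\Pr\nolimits_w[\mathcal{A}(D) \in \mathcal{S}] = \Pr\nolimits_w[\mathcal{A}(D) \in \mathcal{S} \cap G_{x'}] + \Pr\nolimits_w[\mathcal{A}(D) \in \mathcal{S} \cap B_{x'}].
\]
On $G_{x'}$ the density ratio $p(w \mid D)/p(w \mid D')$ is at most $e^{\varepsilon_\mu}$ by construction, so the first summand is bounded by $e^{\varepsilon_\mu} \Pr_w[\mathcal{A}(D') \in \mathcal{S}]$; the second summand is bounded by $\Pr_w[L_{\mathcal{A}}(w, D, D') \geq \varepsilon_\mu \mid x']$, since on $B_{x'}$ the privacy loss exceeds $\varepsilon_\mu$ by definition.

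I would then take expectation over $x' \sim \mu$ on both sides. The residual tail term then collapses to the unconditional joint probability $\Pr_{w, x'}[L_{\mathcal{A}} \geq \varepsilon_\mu]$, which is precisely the quantity that Definition~\ref{def:bayes_dp} bounds by $\delta_\mu$. Interpreting the probabilities in Definition~\ref{def:bayes_dp_approx} as joint laws over $w$ and $x'$ (consistent with the way the privacy loss random variable is set up in Section~\ref{sec:definitions}) then yields~\eqref{eq:bayes_dp_approx}.

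The only place where I expect the bookkeeping to require care is tracking which quantities depend on $x'$ in each of the two sub-cases of neighbouring datasets. If $D' = D \cup \{x'\}$, then the law of $\mathcal{A}(D)$ is independent of $x'$ and averaging over $x'$ is a no-op on the left, while $\mathcal{A}(D')$ and the tail term both depend on $x'$ on the right; if $D = D' \cup \{x'\}$, the roles swap. Both directions follow by symmetry, but it is worth writing out one of them explicitly to confirm that the averaged tail term indeed matches the joint-law probability controlled by Definition~\ref{def:bayes_dp}.
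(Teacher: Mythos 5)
Your proposal is correct and follows essentially the same route as the paper's proof: partition the outcome space by whether the privacy loss exceeds $\varepsilon_\mu$ conditionally on $x'$, bound the good part via the density ratio and the bad part by the tail event, then integrate over $x' \sim \mu$ so that the residual term becomes the joint probability controlled by Definition~\ref{def:bayes_dp}. The bookkeeping point you flag about which of $\mathcal{A}(D)$, $\mathcal{A}(D')$ depends on $x'$ is exactly the observation the paper makes to justify the conditional inequality, so nothing is missing.
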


Bayesian DP repeats some basic properties of the classic DP, such as composition, post-processing resilience and group privacy. More details, proofs for these properties and the above proposition, can be found in supplementary material.

While Definitions~\ref{def:strong_bayes_dp} and~\ref{def:bayes_dp} do not specify the distribution of any point in the dataset other than the additional example $x'$, it is natural to assume that all examples in the dataset are drawn from the same distribution $\mu(x)$. This holds in many real-world applications, including applications evaluated in this paper, and it allows using dataset samples instead of requiring knowing the true distribution.

We also assume that data points are exchangeable~\cite{aldous1985exchangeability}, i.e. any permutation of data points has the same joint probability. It enables tighter accounting for iterative applications of the privacy mechanism~(see Section~\ref{sec:accounting}), is weaker than independence and is naturally satisfied in the considered scenarios.

\subsection{Privacy Accounting}
\label{sec:accounting}
In the context of learning, it is important to be able to keep track of the privacy loss over iterative applications of the privacy mechanism. And since the bounds provided by the basic composition theorem are loose, we formulate the \emph{advanced composition theorem} and develop a general accounting method for Bayesian differential privacy, the \emph{Bayesian accountant}, that provides a tight bound on privacy loss and is straightforward to implement. We draw inspiration from the moments accountant~\cite{abadi2016deep}.

Observe that Eq.~\ref{eq:bayes_dp} is a typical concentration bound inequality, which are well studied in probability theory. One of the most common examples of such bounds is Markov's inequality. In its extended form, it states the following:
\begin{align}
\label{eq:concentration}
	\Pr[|L| \geq \varepsilon_\mu] \leq \frac{\mathbb{E}[\varphi(|L|)]}{\varphi(\varepsilon_\mu)},
\end{align}
where $\varphi(\cdot)$ is a monotonically increasing non-negative function. It is immediately evident that it provides a relation between $\varepsilon_\mu$ and $\delta_\mu$ (i.e. $\delta_\mu = \mathbb{E}[\varphi(|L|)] / \varphi(\varepsilon_\mu)$), and in order to determine them we need to choose $\varphi$ and compute the expectation $\mathbb{E}[\varphi(|L(w, D, D')|)]$. Note that $L(w, D, D') = - L(w, D', D)$, and since the inequality has to hold for any pair of $D, D'$, we can use $L$ instead of $|L|$.

We use the Chernoff bound that can be obtained by choosing $\varphi(L) = e^{\lambda L}$. It is widely known because of its tightness, and although not explicitly stated, it is also used by \citet{abadi2016deep}. The inequality in this case transforms to
\begin{align}
\label{eq:chernoff_bound}
	\Pr[L \geq \varepsilon_\mu] \leq \frac{\mathbb{E}[e^{\lambda L}]}{e^{\lambda\varepsilon_\mu}}.
\end{align}

This inequality requires the knowledge of the moment generating function of $L$ or some bound on it. The choice of the parameter $\lambda$ can be arbitrary, because the bound holds for any value of it, but it determines how tight the bound is. By simple manipulations we obtain 
\begin{align}
\label{eq:mgf}
	\mathbb{E}[e^{\lambda L}] &= \mathbb{E}\left[e^{\lambda \log\frac{p(w|D)}{p(w|D')}}\right] \nonumber \\
		&= \mathbb{E}\left[\left(\frac{p(w|D)}{p(w|D')}\right)^\lambda \right].
\end{align}

If the expectation is taken only over the outcome randomness, this expression is the function of R\'enyi divergence between $p(w|D)$ and $p(w|D')$, and following this path yields re-derivation of R\'enyi differential privacy~\cite{mironov2017renyi}. However, by also taking the expectation over additional examples $x' \sim \mu(x)$, we can further tighten this bound.

By the law of total expectation,
\begin{align}
\label{eq:total_expectation}
	\mathbb{E}\left[\left(\frac{p(w|D)}{p(w|D')}\right)^\lambda \right] = 
		\mathbb{E}_x \left[ \mathbb{E}_w \left[ \left. \left( \frac{p(w|D)}{p(w|D')}\right)^\lambda \right\vert x' \right] \right],
\end{align}
where the inner expectation is again the function of R\'enyi divergence, and the outer expectation is over $\mu(x)$.

Combining Eq.~\ref{eq:mgf} and~\ref{eq:total_expectation} and plugging it in Eq.~\ref{eq:chernoff_bound}, we get
\begin{align}
\label{eq:better_bound}
	\Pr[L \geq \varepsilon_\mu] \leq \mathbb{E}_x \left[ e^{\lambda \mathcal{D}_{\lambda+1} [p(w|D) \| p(w|D')] - \lambda \varepsilon_\mu} \right].
\end{align}

This expression determines how to compute $\varepsilon_\mu$ for a fixed $\delta_\mu$ (or vice versa) for one invocation of the privacy mechanism. However, to accommodate the iterative nature of learning, we need to deal with the composition of multiple applications of the mechanism. We already determined that our privacy notion is naively composable, but in order to achieve better bounds we need a tighter composition theorem. 

\begin{restatable}[Advanced Composition]{theorem}{composition}
\label{thm:advanced_composition}
Let a learning algorithm run for $T$ iterations. Denote by $w^{(1)} \ldots w^{(T)}$ a sequence of private learning outcomes at iterations $1,\ldots,T$, and $L^{(1:T)}$ the corresponding total privacy loss. Then,
\begin{align*}
	\mathbb{E}\left[e^{\lambda L^{(1:T)}}\right] \leq \prod_{t=1}^T \mathbb{E}_x \left[ e^{T\lambda \mathcal{D}_{\lambda+1} (p_t \| q_t)} \right]^{\frac{1}{T}},
\end{align*}
where $p_t = p(w^{(t)} | w^{(t-1)}, D)$, $q_t = p(w^{(t)} | w^{(t-1)}, D')$.
\end{restatable}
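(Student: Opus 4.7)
The plan is to reduce the joint moment generating function over $T$ iterations to a product of single-step Rényi-divergence factors by iterating conditional expectations from the last step backwards, exactly in the spirit of the moments-accountant composition proof, but with the outer expectation over the adjacency example $x'$ retained. The main ingredients are (i) the telescoping identity $L^{(1:T)}=\sum_{t=1}^T L^{(t)}$ where $L^{(t)}=\log\frac{p(w^{(t)}\mid w^{(t-1)},D)}{p(w^{(t)}\mid w^{(t-1)},D')}$, which follows immediately from the Markov factorisation of the joint density stated in Section~\ref{sec:setting}; (ii) the identity, already observed in~\eqref{eq:mgf}, that $\mathbb{E}_{w\sim p}\bigl[(p/q)^{\lambda}\bigr]=e^{\lambda\mathcal{D}_{\lambda+1}(p\|q)}$; and (iii) the law of total expectation combined with the fact that $w^{(t)}$ is drawn from $p_t=p(w^{(t)}\mid w^{(t-1)},D)$ (i.e.\ from the real-world distribution, as emphasised after Definition~\ref{def:privacy_loss}).

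First I would write
\begin{align*}
\mathbb{E}\!\left[e^{\lambda L^{(1:T)}}\right]
 = \mathbb{E}_{x'}\!\left[\mathbb{E}_{w^{(1:T)}\mid x'}\!\left[\prod_{t=1}^T e^{\lambda L^{(t)}}\right]\right],
\end{align*}
and peel the product from the inside out. Conditioning on $w^{(1:T-1)}$ and $x'$, only the factor $e^{\lambda L^{(T)}}$ depends on $w^{(T)}$, so step (ii) gives $\mathbb{E}_{w^{(T)}\mid w^{(T-1)},x'}[e^{\lambda L^{(T)}}]=e^{\lambda\mathcal{D}_{\lambda+1}(p_T\|q_T)}$. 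Substituting back, the iteration index drops to $T-1$, and I would repeat the same conditional-expectation step $T$ times. After $T$ applications, all $w^{(t)}$'s have been integrated out and one obtains
\begin{align*}
\mathbb{E}\!\left[e^{\lambda L^{(1:T)}}\right]
 = \mathbb{E}_{x'}\!\left[\prod_{t=1}^T e^{\lambda\mathcal{D}_{\lambda+1}(p_t\|q_t)}\right].
\end{align*}

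The final step is to turn the expectation of the product into a product of expectations, which yields the claimed factorisation. This is where I expect the main obstacle to lie: in principle $\mathcal{D}_{\lambda+1}(p_t\|q_t)$ depends on the previous private iterate $w^{(t-1)}$ as well as on $x'$, so the factors inside the product are not independent. The clean equality in the theorem follows either (a) by invoking the exchangeability assumption together with the observation that the sensitivity-controlled mechanism makes the per-step Rényi divergence depend on $x'$ (and possibly the fresh sampling randomness) but not on the running state $w^{(t-1)}$, or (b) by upper-bounding each factor by its supremum over the conditioning history, as is done in the moments accountant. I would explicitly flag this step, state the independence/exchangeability condition under which the product factorises, and conclude by noting that the standard Gaussian/subsampled-Gaussian instantiation of Section~\ref{sec:gauss_mechanism} satisfies it, so the composition identity can be used as stated.
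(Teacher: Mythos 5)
Your proposal follows essentially the same route as the paper's proof: telescope $L^{(1:T)}$ into the per-step losses $L^{(t)}$, convert each step's conditional moment into a R\'enyi-divergence factor via $\mathbb{E}_{w\sim p}\bigl[(p/q)^{\lambda}\bigr]=e^{\lambda\mathcal{D}_{\lambda+1}(p\|q)}$ and the law of total expectation, and then factorise the expectation of the product. The step you flag as the main obstacle is precisely the one the paper resolves (rather tersely) by invoking the exchangeability assumption on the data when swapping the expectation with the product, so your instinct to state that condition explicitly is correct and, if anything, more careful than the paper's own treatment.
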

\begin{proof}
See supplementary material.
\end{proof}

Unlike the moments accountant, our composition theorem presents an upper bound on the total privacy loss due to computing expectation over the distribution of the same example over all iterations. However, we found that the inequality tends to be tight in practice, and there is little overhead compared to na\"ively swapping the product and the expectation.

We denote the logarithm of the quantity inside the product in Theorem~\ref{thm:advanced_composition} as $c_t(\lambda, T)$ and call it the \emph{privacy cost} of the iteration $t$:
\begin{align}
\label{eq:privacy_cost}
	c_t(\lambda, T) = \log \mathbb{E}_x \left[ e^{T \lambda \mathcal{D}_{\lambda+1} (p_t \| q_t)} \right]^{\frac{1}{T}}
\end{align}

The privacy cost of the whole learning process is then a sum of the costs of each iteration. We can now relate $\varepsilon$ and $\delta$ parameters of BDP through the privacy cost.
\begin{theorem}
\label{thm:eps_delta_relation}
Let the algorithm produce a sequence of private learning outcomes $w^{(1)} \ldots w^{(T)}$ using a known probability distribution $p(w^{(t)} | w^{(t-1)}, D)$. Then, for a fixed $\varepsilon_\mu$:
\begin{align*}
	\log \delta_\mu \leq \sum_{t=1}^T c_t(\lambda, T) - \lambda \varepsilon_\mu.
\end{align*}
\end{theorem}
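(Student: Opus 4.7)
The plan is to assemble the statement from two ingredients already in hand: the Chernoff bound in Eq.~\ref{eq:chernoff_bound}, which turns tail probabilities of the privacy loss into moment generating functions, and Theorem~\ref{thm:advanced_composition}, which factorises that moment generating function over the $T$ iterations.

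First, I would apply the Chernoff bound to the cumulative privacy loss $L^{(1:T)}$ of the composed mechanism: for every $\lambda > 0$,
\begin{align*}
\Pr\bigl[L^{(1:T)} \geq \varepsilon\bigr] \leq \mathbb{E}\bigl[e^{\lambda L^{(1:T)}}\bigr]\, e^{-\lambda \varepsilon}.
\end{align*}
Next, I would invoke Theorem~\ref{thm:advanced_composition} to replace the moment generating function by the product
\begin{align*}
\mathbb{E}\bigl[e^{\lambda L^{(1:T)}}\bigr] = \prod_{t=1}^T \mathbb{E}_x\!\left[e^{\lambda \mathcal{D}_{\lambda+1}(p_t \| q_t)}\right] = \prod_{t=1}^T e^{c_t(\lambda)},
\end{align*}
where the second equality is exactly the definition of the per-iteration privacy cost in Eq.~\ref{eq:privacy_cost}. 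Taking the logarithm of the resulting inequality gives
\begin{align*}
\log \Pr\bigl[L^{(1:T)} \geq \varepsilon\bigr] \leq \sum_{t=1}^T c_t(\lambda) - \lambda \varepsilon.
\end{align*}

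Finally, I would appeal to Definition~\ref{def:bayes_dp}: for the composed mechanism to satisfy $(\varepsilon, \delta)$-Bayesian differential privacy, it is sufficient to take $\delta$ to be any upper bound on $\Pr[L^{(1:T)} \geq \varepsilon]$, and the tightest such bound furnished by Chernoff is exactly $\exp\!\bigl(\sum_{t=1}^T c_t(\lambda) - \lambda\varepsilon\bigr)$. This yields $\log \delta \leq \sum_{t=1}^T c_t(\lambda) - \lambda\varepsilon$, as claimed, and since $\lambda$ was arbitrary one can subsequently minimise over it in applications.

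I do not anticipate a genuine obstacle here: the real technical work is already concentrated in the advanced composition theorem, whose proof must handle the dependence of successive iterates $w^{(t)}$ on $w^{(t-1)}$ and the shared random example $x'$. Given that theorem, the only care needed in writing up Theorem~\ref{thm:eps_delta_relation} is being explicit that $\delta$ denotes the Chernoff upper bound on the tail probability of $L^{(1:T)}$, consistent with the role of $\delta$ in Definition~\ref{def:bayes_dp}, and that the bound holds simultaneously for every $\lambda > 0$.
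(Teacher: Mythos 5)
Your proposal is correct and follows exactly the route the paper intends: the paper presents this theorem as an immediate consequence of the Chernoff bound (Eq.~\ref{eq:chernoff_bound}) applied to $L^{(1:T)}$, the factorisation from Theorem~\ref{thm:advanced_composition}, and the definition of $c_t(\lambda)$ in Eq.~\ref{eq:privacy_cost}, which is precisely your chain of steps. Your closing remark about reading $\delta$ as the Chernoff upper bound on $\Pr[L^{(1:T)} \geq \varepsilon]$, consistent with Definition~\ref{def:bayes_dp}, is also the correct interpretation of the inequality's direction.
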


\begin{corollary}
\label{thm:eps_from_delta}
Under the conditions above, for a fixed $\delta_\mu$:
\begin{align*}
	\varepsilon_\mu \leq \frac{1}{\lambda} \sum_{t=1}^T c_t(\lambda, T) - \frac{1}{\lambda} \log \delta_\mu.
\end{align*}
\end{corollary}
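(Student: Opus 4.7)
The plan is to derive the corollary as a direct algebraic rearrangement of Theorem~\ref{thm:eps_delta_relation}, since the two statements express the same underlying inequality but solved for different variables. First I would invoke Theorem~\ref{thm:eps_delta_relation}, which under the stated hypotheses gives
\begin{align*}
    \log \delta \leq \sum_{t=1}^T c_t(\lambda) - \lambda \varepsilon.
\end{align*}
Next I would rearrange this inequality by adding $\lambda\varepsilon$ to both sides and subtracting $\log\delta$, yielding $\lambda\varepsilon \leq \sum_{t=1}^T c_t(\lambda) - \log\delta$. Finally, I would divide both sides by $\lambda$, which is legitimate since the Chernoff-bound derivation underlying Theorem~\ref{thm:eps_delta_relation} (see Eq.~\ref{eq:chernoff_bound}) presumes $\lambda > 0$, so the direction of the inequality is preserved.

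The resulting bound is exactly the claimed
\begin{align*}
    \varepsilon \leq \frac{1}{\lambda}\sum_{t=1}^T c_t(\lambda) - \frac{1}{\lambda}\log\delta.
\end{align*}
I would briefly remark that since the inequality holds for every admissible $\lambda > 0$, in practice one minimises the right-hand side over $\lambda$ to obtain the tightest privacy bound for the target $\delta$, just as is done for the moments accountant.

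There is really no substantive obstacle here: the corollary is a one-line rewriting of the preceding theorem, and the only point worth flagging is the sign of $\lambda$ needed to preserve the inequality direction upon division. All the analytic content (the Chernoff bound, the composition identity of Theorem~\ref{thm:advanced_composition}, and the expectation over $x' \sim \mu(x)$ absorbed into $c_t(\lambda)$) has already been established in Theorem~\ref{thm:eps_delta_relation}.
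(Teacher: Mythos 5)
Your proposal is correct and matches the paper's intent: the corollary is stated without a separate proof precisely because it is the one-line rearrangement of Theorem~\ref{thm:eps_delta_relation} that you give, with the division by $\lambda>0$ justified by the Chernoff-bound setup. Nothing further is needed.
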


Theorems~\ref{thm:advanced_composition},~\ref{thm:eps_delta_relation} and Corollary~\ref{thm:eps_from_delta} immediately provide us with an efficient privacy accounting algorithm. During training, we compute the privacy cost $c_t(\lambda, T)$ for each iteration $t$, accumulate it, and then use to compute $\varepsilon_\mu, \delta_\mu$ pair. This process is ideologically close to that of the moment accountant but accumulates a different quantity (note the change from the privacy loss random variable to R\'enyi divergence). We further explore this connection in Section~\ref{sec:relation_to_ma_rdp}.

The link to R\'enyi divergence is an advantage for applicability of this framework: if the outcome distribution $p(w|D)$ has a known analytic expression for R\'enyi divergence~\cite{gil2013renyi,van2014renyi}, it can be easily plugged into our accountant.

For the popular subsampled Gaussian mechanism~\cite{abadi2016deep}, we can demonstrate the following.
\begin{restatable}{theorem}{gaussian}
\label{thm:gauss_privacy_cost}
Given the Gaussian noise mechanism with the noise parameter $\sigma$ and subsampling probability $q$, the privacy cost for $\lambda \in \mathbb{N}$ at iteration $t$ can be expressed as
\begin{align*}
	c_t(\lambda, T) = \max\{c_t^{L}(\lambda, T), c_t^{R}(\lambda, T)\},
\end{align*}
where
\begin{align*}
	&c_t^{L}(\lambda, T) = \frac{1}{T} \log \mathbb{E}_x \left[ \mathbb{E}_{k \sim B(\lambda+1, q)} \left[e^{\frac{k^2 - k}{2\sigma^2} \|g_t - g'_t\|^2} \right]^T \right], \\
	&c_t^{R}(\lambda, T) = \frac{1}{T} \log \mathbb{E}_x \left[ \mathbb{E}_{k \sim B(\lambda, q)} \left[e^{\frac{k^2 + k}{2\sigma^2} \|g_t - g'_t\|^2} \right]^T \right],
\end{align*}
and $B(\lambda, q)$ is the binomial distribution with $\lambda$ experiments and the probability of success $q$. 
\end{restatable}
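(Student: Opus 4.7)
The plan is to compute $e^{\lambda \mathcal{D}_{\lambda+1}(p_t \| q_t)} = \int p_t^{\lambda+1}/q_t^\lambda \, dw$ in closed form for each possible ordering of the adjacent datasets and then take the worst case. Since adjacency is symmetric in $D$ and $D'$, the privacy cost must cover both the case where $D$ contains $x'$ (so $p_t = (1-q)\mathcal{N}(g_t, \sigma^2) + q\mathcal{N}(g'_t, \sigma^2)$ and $q_t = \mathcal{N}(g_t, \sigma^2)$) and the case where $D'$ contains $x'$ (with $p_t, q_t$ swapped). I expect these two cases to produce $c_t^L$ and $c_t^R$ respectively, so that the worst-case gives exactly $\max\{c_t^L, c_t^R\}$. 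Throughout, let $\mathcal{N}_0 := \mathcal{N}(g_t, \sigma^2)$ and $\mathcal{N}_1 := \mathcal{N}(g'_t, \sigma^2)$.

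The workhorse will be the Gaussian integral identity, obtained by completing the square in the exponent: for every $j \in \mathbb{Z}$,
\[
\int \mathcal{N}_0(w)^{1-j}\,\mathcal{N}_1(w)^{j}\,dw \;=\; \exp\!\left(\frac{j(j-1)\|g_t-g'_t\|^2}{2\sigma^2}\right).
\]
The fact that $j$ is allowed to be negative will be crucial in the second case below.

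In the first case (mixture on top), $e^{\lambda \mathcal{D}_{\lambda+1}(p_t \| q_t)} = \int((1-q)\mathcal{N}_0 + q\mathcal{N}_1)^{\lambda+1}/\mathcal{N}_0^\lambda\,dw$. Since $\lambda+1$ is a positive integer, the binomial theorem applies and turns the integrand into a weighted sum of $\mathcal{N}_0^{1-k}\mathcal{N}_1^{k}$ for $k \in \{0,\ldots,\lambda+1\}$; applying the identity above with $j = k$ assigns each term the factor $\exp((k^2-k)\|g_t-g'_t\|^2/(2\sigma^2))$. Repackaging the binomial weights as a $B(\lambda+1, q)$ expectation and taking $\log \mathbb{E}_x[\cdot]$ then yields $c_t^L(\lambda)$ exactly.

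The second case (mixture on the bottom) is the main obstacle, because a binomial expansion of the denominator $q_t^\lambda$ is not available. My plan is to reinterpret the $\lambda$-th power of the mixture as a binomial expectation over Gaussian-product atoms,
\[
q_t^\lambda \;=\; \mathbb{E}_{k \sim B(\lambda, q)}\!\left[\mathcal{N}_0^{\lambda-k}\mathcal{N}_1^{k}\right],
\]
and then invoke Jensen's inequality for the strictly convex map $z \mapsto 1/z$ on $(0,\infty)$ to obtain the pointwise bound $1/q_t^\lambda \leq \mathbb{E}_{k \sim B(\lambda, q)}[\mathcal{N}_0^{k-\lambda}\mathcal{N}_1^{-k}]$. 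Multiplying by $\mathcal{N}_0^{\lambda+1}$, integrating termwise, and applying the Gaussian identity with $j = -k$ (so that $j(j-1) = k^2 + k$) will yield the $B(\lambda, q)$-weighted sum with exponent $(k^2+k)\|g_t-g'_t\|^2/(2\sigma^2)$; taking $\log \mathbb{E}_x[\cdot]$ then delivers $c_t^R(\lambda)$, and the overall maximum over both orderings completes the argument. The delicate move is this Jensen step: reinterpreting $q_t^\lambda$ as a binomial mixture and using convexity of $z^{-1}$ is what converts the otherwise intractable denominator into the same type of closed-form sum of Gaussian integrals as in the first case, but with the $(k^2+k)$ exponent and one fewer binomial trial.
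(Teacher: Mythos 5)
Your proposal is correct and follows essentially the same route as the paper's proof: split into the two directions of the divergence, use the binomial expansion plus the Gaussian integral identity $\int \mathcal{N}_0^{1-j}\mathcal{N}_1^{j}\,dw = e^{j(j-1)\|g_t-g'_t\|^2/(2\sigma^2)}$ for the exact $c_t^L$ branch, and a convexity bound for the reverse branch. The only cosmetic difference is that the paper applies convexity of $z\mapsto 1/z$ to the two-point mixture $(1-q)\mathcal{N}_0+q\mathcal{N}_1$ \emph{before} raising to the power $\lambda$ and expanding, whereas you expand the $\lambda$-th power into a $B(\lambda,q)$-mixture first and then apply Jensen; the two orderings produce the identical $B(\lambda,q)$-weighted bound with exponent $(k^2+k)\|g_t-g'_t\|^2/(2\sigma^2)$.
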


\subsection{Privacy Cost Estimator}
\label{sec:privacy_cost_estimator}
Computing $c_t(\lambda, T)$ precisely requires access to the data distribution $\mu(x)$, which is unrealistic. Therefore, we need an estimator for $\mathbb{E}[e^{T\lambda \mathcal{D}_{\lambda+1} (p_t \| q_t)}]$.

Typically, having access to the distribution samples, one would use the law of large numbers and approximate the expectation with the sample mean. This estimator is unbiased and converges with the growing number of samples. However, these are not the properties we are looking for. The most important property of the estimator in our context is that it \emph{does not underestimate} $\mathbb{E}[e^{T\lambda \mathcal{D}_{\lambda+1} (p_t \| q_t)}]$, because the bound (Eq.~\ref{eq:chernoff_bound}) would not hold for this estimate otherwise. 

We employ the Bayesian view of the parameter estimation problem~\cite{oliphant2006bayesian} and design an estimator with this single property: given a fixed $\gamma$, it returns the value that overestimates the true expectation with probability $1 - \gamma$. We then incorporate the estimator uncertainty $\gamma$ in $\delta_\mu$.

\subsubsection{Binary Case}
Let us demonstrate the process of constructing the expectation estimator with the above property on a simple binary example. This technique is based on~\cite{oliphant2006bayesian} and it translates directly to other classes of distributions with minor adjustments. Here, we also address the concern of not taking into account the data absent from the dataset.

Let the data $\{x_1, x_2, \ldots, x_N\}$, such that $x_i \in \{0, 1\}$, have a common mean and a common variance. As this information is insufficient to solve our problem, let us also assume that the data comes from \emph{the maximum entropy distribution}. This assumption adds the minimum amount of information to the problem and makes our estimate pessimistic.

For the binary data with the common mean $\rho$, the maximum entropy distribution is the Bernoulli distribution:
\begin{align}
	f(x_i | \rho) = \rho^{x_i} (1 - \rho)^{1 - x_i},
\end{align}
where $\rho$ is also the probability of success ($x_i = 1$). Then,
\begin{align}
	f(x_1, \ldots, x_N | \rho) = \rho^{N_1} (1 - \rho)^{N_0},
\end{align}
where $N_0$ and $N_1$ is the number of $0$'s and $1$'s in the dataset.

We impose the flat prior on $\rho$, assuming all values in $[0, 1]$ are equally likely, and use Bayes' theorem to determine the distribution of $\rho$ given the data:
\begin{align}
	f(\rho | x_1, \ldots, x_N) = \frac{\Gamma(N_0 + N_1 +2)}{\Gamma(N_0 + 1)\Gamma(N_1 + 1)} \rho^{N_1} (1 - \rho)^{N_0},
\end{align}
where the normalisation constant in front is obtained by setting the integral over $\rho$ equal to $1$.

Now, we can use the above distribution of $\rho$ to design an estimator $\hat{\rho}$, such that it overestimates $\rho$ with high probability, i.e. $\Pr\left[\rho \leq \hat{\rho}\right] \geq 1 - \gamma$. Namely, $\hat{\rho} = F^{-1}(1-\gamma)$, where $F^{-1}$ is the inverse of the CDF: 
\begin{align*}
	F^{-1}&(1-\gamma) \\ 
		&= \inf\{z \in \mathbb{R} : \int_{-\infty}^z f(t | x_1, \ldots, x_N) dt \geq 1 - \gamma \}.
\end{align*}
We refer to $\gamma$ as the \emph{estimator failure probability}, and to $1 - \gamma$ as the \emph{estimator confidence}.

To demonstrate the resilience of this estimator to unseen data, consider the following example. Let the true expectation be $0.01$, and let the data consist of 100 zeros, and no ones. A typical "frequentist" mean estimator would confidently output $0$. However, our estimator would never output $0$, unless the confidence is set to $0$. When the confidence is set to $1$ ($\gamma = 0$), the output is $1$, which is the most pessimistic estimate. Finally, the output $\hat{\rho} = \rho = 0.01$ will be assigned the failure probability $\gamma = 0.99^{101} \approx 0.36$, which is the probability of not drawing a single $1$ in $101$ draws. 

In a real-world system, the confidence would be set to a much higher level (in our experiments, we use $\gamma = 10^{-15}$), and the probability of $1$ would be significantly overestimated. Thus, unseen data do not present a problem for this estimator, because it exaggerates the probability of data that increase the estimated expectation.

\subsubsection{Continuous Case}
For applications evaluated in this paper, we are primarily concerned with continuous case. Thus, let us define the following $m$-sample estimator of $c_t(\lambda, T)$ for continuous distributions with existing mean and variance:
\begin{align}
	\hat{c}_t(\lambda, T) = \log \left[ M(t) + \frac{F^{-1}(1 - \gamma, m - 1)}{\sqrt{m - 1}} S(t) \right],
\end{align}
where $M(t)$ and $S(t)$ are the sample mean and the sample standard deviation of $e^{\lambda \hat{D}_{\lambda+1}^{(t)} }$, $F^{-1}(1-\gamma, m - 1)$ is the inverse of the Student's $t$-distribution CDF at $1-\gamma$ with $m - 1$ degrees of freedom, and
\begin{align}
	& \hat{D}_{\lambda+1}^{(t)}  = \max \left\{ D_{\lambda+1} (\hat{p}_t \| \hat{q}_t),~D_{\lambda+1} (\hat{q}_t \| \hat{p}_t) \right\}, \nonumber \\
	& \hat{p}_t = p(w^{(t)}~|~w^{(t-1)}, B^{(t)}), \nonumber \\
	& \hat{q}_t = p(w^{(t)}~|~w^{(t-1)}, B^{(t)} \setminus \{x_i\}).
\end{align}
Since in many cases learning is performed on mini-batches, we can similarly compute R\'enyi divergence on batches $B^{(t)}$.

\begin{restatable}{theorem}{estimator}
\label{thm:estimator}
Estimator $\hat{c}_t(\lambda, T)$ overestimates $c_t(\lambda, T)$ with probability $1-\gamma$. That is,
\begin{align*}
	\Pr \left[ \hat{c}_t(\lambda, T) < c_t(\lambda, T) \right] \leq \gamma.
\end{align*}
\end{restatable}
The proof follows the steps of the binary example above.

\begin{remark}
By adapting the maximum entropy probability distribution an equivalent estimator can be derived for other classes of distributions (e.g. discrete).
\end{remark}

To avoid introducing new parameters in the privacy definition, we can incorporate the probability $\gamma$ of underestimating the true expectation  in $\delta_\mu$. We can re-write:

\begin{align}
	\Pr&[L_\mathcal{A}(w^{(t)}, D, D') \geq \varepsilon_\mu] \nonumber \\
		&= \Pr\left[ L_\mathcal{A}(w^{(t)}, D, D') \geq \varepsilon_\mu, \hat{c}_t(\lambda, T) \geq c_t(\lambda, T) \right] \nonumber \\
		&\quad + \Pr\left[ L_\mathcal{A}(w^{(t)}, D, D') \geq \varepsilon_\mu, \hat{c}_t(\lambda, T) < c_t(\lambda, T) \right].
\end{align}

When $\hat{c}_t(\lambda, T) \geq c_t(\lambda, T)$, using the Chernoff inequality, the first summand is bounded by $\beta = \exp(\sum_{t=1}^T \hat{c}_t(\lambda, T) - \lambda \varepsilon_\mu)$.

Whenever $\hat{c}_t(\lambda, T) < c_t(\lambda, T)$, 
\begin{align}
	\Pr&[ L_\mathcal{A}(w^{(t)}, D, D') \geq \varepsilon_\mu, \hat{c}_t(\lambda, T) < c_t(\lambda, T) ] \nonumber \\
		&\leq \Pr[ \hat{c}_t(\lambda, T) < c_t(\lambda, T) ] \nonumber \\
		&\leq \gamma.
\end{align}

Therefore, the true $\delta_\mu$ is bounded by $\beta + \gamma$, and despite the incomplete data, we can claim that the mechanism is $(\varepsilon_\mu, \delta_\mu)$-Bayesian differentially private, where $\delta_\mu = \beta + \gamma$.

\begin{remark}
This step further changes the interpretation of $\delta_\mu$ in Bayesian differential privacy compared to the classic $\delta$ of DP. Apart from the probability of the privacy loss exceeding $\varepsilon_\mu$, e.g. in the tails of its distribution, it also incorporates our uncertainty about the true data distribution (in other words, the probability of underestimating the true expectation because of not observing enough data samples). It can be intuitively understood as accounting for unobserved (but feasible) data in $\delta_\mu$, rather than in $\varepsilon_\mu$.
\end{remark}

\subsection{Discussion}

\subsubsection{Relation to DP}
\label{sec:relation_to_dp}
To better understand how the BDP bound relates to the traditional DP, consider the following conditional probability:
\begin{align}
\Delta(\varepsilon, x') = \Pr\left[ L(w, D, D') > \varepsilon ~|~ D, D' = D \cup \{x'\} \right].
\end{align}
The moments accountant outputs $\delta$ that upper-bounds $\Delta(\varepsilon, x')$ for all $x'$. It is not true in general for other accounting methods, but let us focus on MA, as it is by far the most popular. Consequently, the MA bound is
\begin{align}
\max_{x} \Delta(\varepsilon, x) \leq \delta,
\end{align}
where $\varepsilon$ is a chosen constant. At the same time, BDP bounds the probability that is not conditioned on $x'$, but we can transform one to another through marginalisation and get:
\begin{align}
\mathbb{E}_{x} \left[ \Delta(\varepsilon, x) \right] \leq \delta_\mu.
\end{align}
Since $\Delta(\cdot)$ is a non-negative random variable in $x$, we can apply Markov's inequality and obtain a tail bound on it using $\delta_\mu$. \emph{We can therefore find a pair $(\varepsilon, \delta)_p$ that holds for any percentile $p$ of the data distribution, not just in expectation.} In all our experiments, we consider bounds well above 99th percentile, so it is very unlikely to encounter data for which the equivalent DP guarantee doesn't hold. Moreover, it is possible to characterise privacy by building a curve for different percentiles, and hence, gain more insight into how well users and their data are protected.

\subsubsection{Relation to Moments Accountant}
\label{sec:relation_to_ma_rdp}
As mentioned in Section~\ref{sec:accounting}, removing the distribution requirement on $D, D'$ and further simplifying Eq.~\ref{eq:better_bound}, we can recover the relation between R\'enyi DP and $(\varepsilon, \delta)$-DP.

At the same time, our accounting technique closely resembles the moments accountant. In fact, we can show that the moments accountant is a special case of Theorem~\ref{thm:gauss_privacy_cost}. Ignoring the data distribution information and substituting expectation by $\max_{D,D'}$ yields the substitution of $\|g_t - g'_t\|$ for $C$ in Theorem~\ref{thm:gauss_privacy_cost}, where $C$ is the sensitivity (or clipping threshold), which turns out to be the exact moments accountant bound. In addition, there are some extra benefits, such as avoiding numerical integration when using $\lambda \in \mathbb{N}$ due to connection to Binomial distribution, which improves numerical stability and computational efficiency.

\subsubsection{Privacy of $\hat{c}_t(\lambda, T)$}
\label{sec:privacy_of_estimator}

\begin{figure*}
	\centering
	\begin{subfigure}{0.245\textwidth}
    		\includegraphics[trim={10pt 0 8pt 0},clip,width=\textwidth]{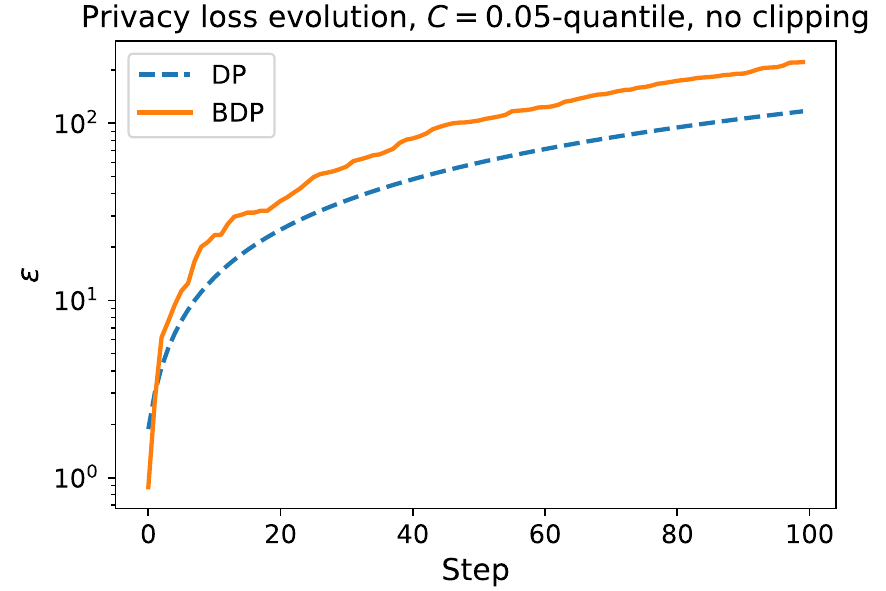}
    		\caption{$0.05$-quantile of $\|\nabla f\|$.}
	\end{subfigure}
	\begin{subfigure}{0.245\textwidth}
    		\includegraphics[trim={10pt 0 8pt 0},clip,width=\textwidth]{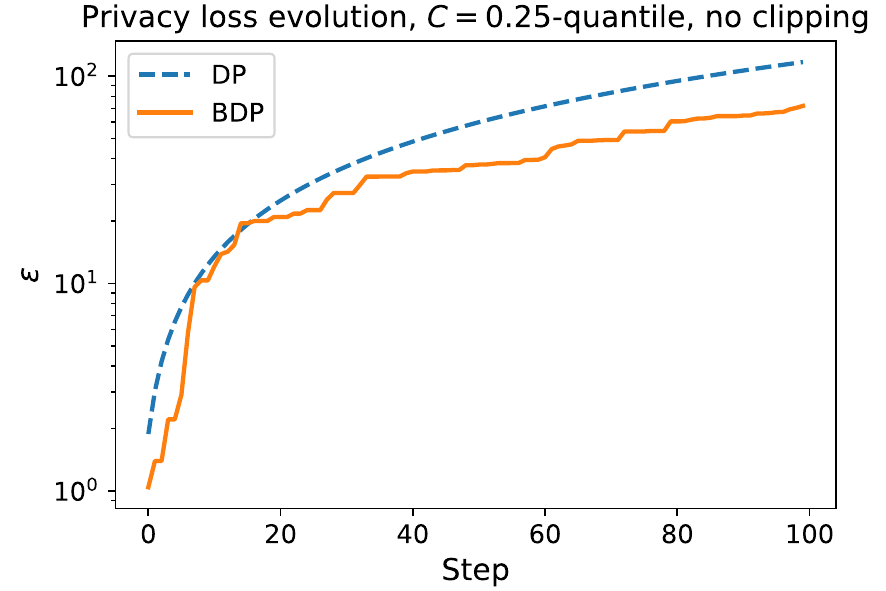}
    		\caption{$0.25$-quantile of $\|\nabla f\|$.}
	\end{subfigure}
	\begin{subfigure}{0.245\textwidth}
    		\includegraphics[trim={10pt 0 8pt 0},clip,width=\textwidth]{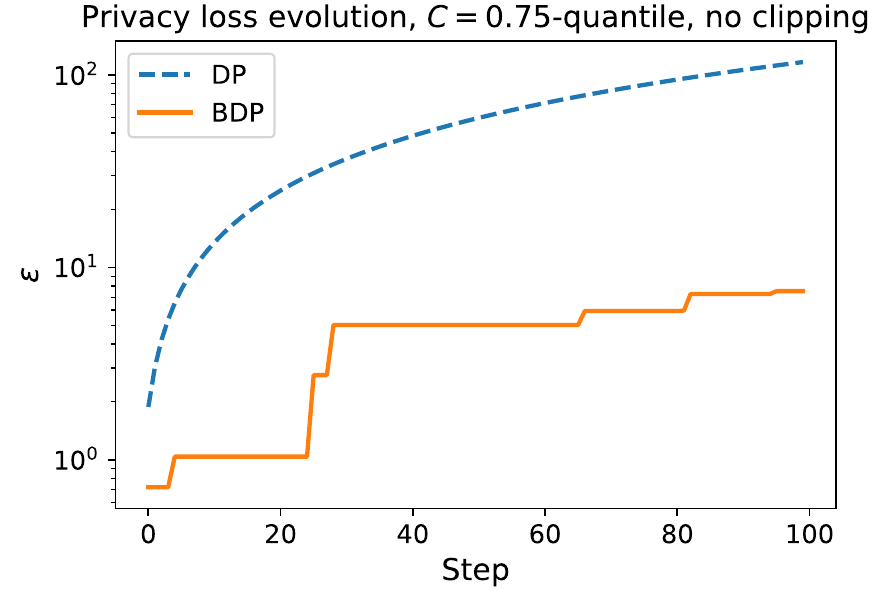}
    		\caption{$0.75$-quantile of $\|\nabla f\|$.}
	\end{subfigure}
	\begin{subfigure}{0.245\textwidth}
    		\includegraphics[trim={10pt 0 8pt 0},clip,width=\textwidth]{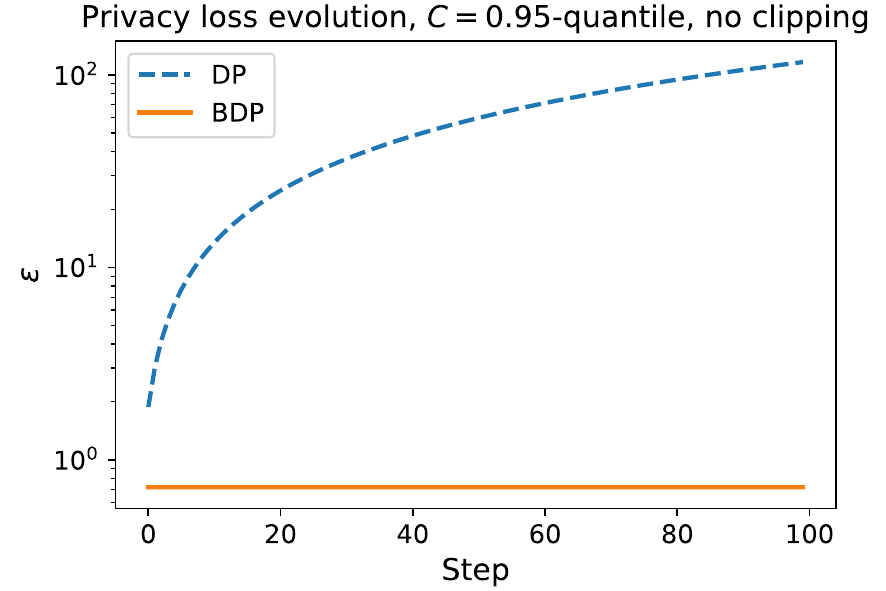}
    		\caption{$0.95$-quantile of $\|\nabla f\|$.}
    		\label{fig:eps_step_no_clipping_d}
	\end{subfigure}
	\caption{Evolution of $\varepsilon$ and $\varepsilon_\mu$ over multiple steps of the Gaussian noise mechanism with $\sigma = C$ for DP (with clipping) and BDP (without clipping). Sub-captions indicate the noise variance relative to the gradient norms distribution.}
	\label{fig:eps_step_no_clipping}
\end{figure*}

Due to calculating $\hat{c}_t(\lambda, T)$ from data, our privacy guarantee becomes data-dependent and may potentially leak information. To obtain a theoretical bound on this leakage, we need to get back to the maximum entropy assumption in Section~\ref{sec:privacy_cost_estimator}, and assume that $M(t)$ and $S(t)$ are following some specific distributions, such as Gaussian and $\chi^2$ distributions. Hence, in case of simple random sampling, these statistics for two neighbour datasets are differentially private and the privacy parameters can be computed using R\'enyi divergence. Furthermore, these guarantees are controlled by the number of samples used to compute the statistics: the more samples are used, the more accurate the statistics are, and the less privacy leakage occurs. This property can be used to control estimates privacy without sacrificing their tightness, only at the cost of extra computation time. Without distributional assumptions, the bound can be computed in the limit of the sample size used by the estimator, using the CLT.

On the other hand, consider the fact that the information from many high-dimensional vectors gets first compressed down to their pairwise distances, which are not as informative in high-dimensional spaces (i.e. the curse of dimensionality), and then down to one number. Intuitively, at this rate of compression very little knowledge can be gained by an attacker in practice.

The first approach would provide little information about real-world cases due to potentially unrealistic assumptions, and hence, we opt for the second approach. We examine pairwise gradient distances of the points within the training set and outside, and demonstrate that the privacy leakage is not statistically significant in practice (see Section~\ref{sec:deep_learning}).

\section{Evaluation}
\label{sec:evaluation}

This experimental section comprises two parts. First, we examine how well Bayesian DP composes over multiple steps. We use the Bayesian accountant and compare to the state-of-the-art DP results obtained by the moments accountant. Second, we consider the context of machine learning. In particular, we use the differentially private stochastic gradient descent (DP-SGD), a well known privacy-preserving learning technique broadly used in combination with the moments accountant, to train neural networks on classic image classification tasks MNIST~\cite{lecun1998gradient} and CIFAR10~\cite{krizhevsky2009learning}. We then compare the accuracy and privacy guarantees obtained under BDP and under DP. We also perform experiments with variational inference on Abalone~\cite{waugh1995extending} and Adult~\cite{kohavi1996scaling} datasets.

Importantly, DP and BDP can use the same privacy mechanism and be accounted in parallel to ensure the DP guarantees hold if BDP assumptions fail. Thus, all comparisons in this section should be viewed in the following way: the reported BDP guarantee would apply to \emph{typical} data (i.e. data drawn from the same distribution as the dataset); the reported DP guarantee would apply to all other data; their difference is the advantage for typical data we gain by using Bayesian DP. In some experiments we use smaller noise variance for BDP in order to speed up training, meaning that the reported BDP guarantees will further improve if noise variance is increased to DP levels. For more details and additional experiments, we refer the reader to the supplementary material, while the source code is available on GitHub\footnote{\url{https://github.com/AlekseiTriastcyn/bayesian-differential-privacy}}.

\begin{table}
	\setlength{\tabcolsep}{3pt}
	\caption{Estimated privacy bounds $\varepsilon$ for $\delta=10^{-5}$ and $\delta_\mu = 10^{-10}$ for MNIST, CIFAR10, Abalone and Adult datasets. In parenthesis, a potential attack success probability $P(A)$.}
	\label{tab:privacy}
	\centering
	\begin{tabular}{l | c c | c c}
		\toprule
		 						& \multicolumn{2}{c|}{\bf Accuracy}	& \multicolumn{2}{c}{\bf Privacy} 	\\
		{\bf Dataset} 	& {Baseline} 			& {Private}			& {DP} 						& {BDP} 					\\
		\midrule
		MNIST 				& $99\%$ 			& $96\%$				& $2.2~(0.898)$ 		& \boldmath${0.95~(0.721)}$	\\
		CIFAR10 			& $86\%$ 			& $73\%$				& $8.0~(0.999)$		& \boldmath${0.76~(0.681)}$		\\
		Abalone 			& $77\%$ 				& $76\%$				& $7.6~(0.999)$		& \boldmath${0.61~(0.649)}$		\\
		Adult 				& $81\%$ 				& $81\%$				& $0.5~(0.623)$ 		& \boldmath${0.2~(0.55)}$		\\
		\bottomrule
	\end{tabular}
\end{table}

\subsection{Composition}
\label{sec:composition}
First, we study the growth rate of the privacy loss over a number of mechanism invocations. This experiment is carried out using synthetic gradients drawn from the Weibull distribution with the shape parameter $< 1$ to imitate a more difficult case of heavy-tailed gradient distributions. We do not clip gradients for BDP in order to show the raw effect of the signal-to-noise ratio on the privacy loss behaviour. Technically, bounded sensitivity is not as essential for BDP, because extreme individual contributions are mitigated by their low probability. However, in practice it is still advantageous to have a better control over privacy loss spikes and ensure that the worst-case DP guarantee is preserved.

In Figure~\ref{fig:eps_step_no_clipping}, we plot $\varepsilon$ and $\varepsilon_\mu$ as a function of steps for different levels of noise. Naturally, as the noise standard deviation gets closer to the expected gradients norm, the growth rate of the privacy loss decreases dramatically. Even when the noise is at the $0.25$-quantile, the Bayesian accountant matches the moments accountant. It is worth noting, that DP behaves the same in all these experiments because the gradients get clipped at the noise level $C$. Introducing clipping for BDP yields the behaviour of Figure~\ref{fig:eps_step_no_clipping_d}, as we demonstrate in the next section on real data.

\subsection{Learning}
\label{sec:deep_learning}
We now consider the application to privacy-preserving deep learning. Our setting closely mimics that of~\cite{abadi2016deep} to enable a direct comparison with the moments accountant and DP. We use a version of DP-SGD that has been extensively applied to build differentially private machine learning models. The idea of DP-SGD is to clip the gradient norm to some constant $C$ (ensuring bounded sensitivity) and add Gaussian noise with variance $C^2\sigma^2$ at every iteration of SGD. For Abalone and Adult, we use variational inference in a setting similar to~\cite{jalko2016differentially}.

Using the gradient distribution information allows the BDP models to reach the same accuracy at a much lower $\varepsilon$ (for $99.999\%$ of data points from this distribution, see Section~\ref{sec:relation_to_dp}). On MNIST, we manage to reduce it from $2.2$ to $0.95$. For CIFAR10, from $8.0$ to $0.76$. See details in Table~\ref{tab:privacy}. Moreover, since less noise is required for Bayesian DP, the models reach the same test accuracy much faster. For example, our model reaches $96\%$ accuracy within 50 epochs for MNIST, while DP model requires more noise and slower training over hundreds of epochs to avoid $\varepsilon$ blowing up. These results confirm that discounting outliers in the privacy accounting process is highly beneficial for getting high accuracy and tighter guarantees for all the other points. To make our results more transparent, we include in Table~\ref{tab:privacy} the potential attack success probability $P(A)$ computed using Eq.~\ref{eq:motivation}. In this interpretation, the benefits of using BDP become even more apparent.

An important aspect of BDP, discussed in Section~\ref{sec:privacy_of_estimator}, is the potential privacy leakage of the privacy cost estimator. To illustrate that this leakage is minimal, we conduct the following experiment. After training the model (to ensure it contains as much information about data as possible), we compute the gradient pairwise distances over train and test sets. We then plot the histograms of these distances to inspect any divergence that would distinguish the data that was used in training. Note that this is more information than what is available to an adversary, who only observes $\varepsilon_\mu$, $\delta_\mu$. 

\begin{figure}
	\centering
	\begin{subfigure}{0.495\linewidth}
		\includegraphics[trim={10pt 0 10pt 0},width=\textwidth]{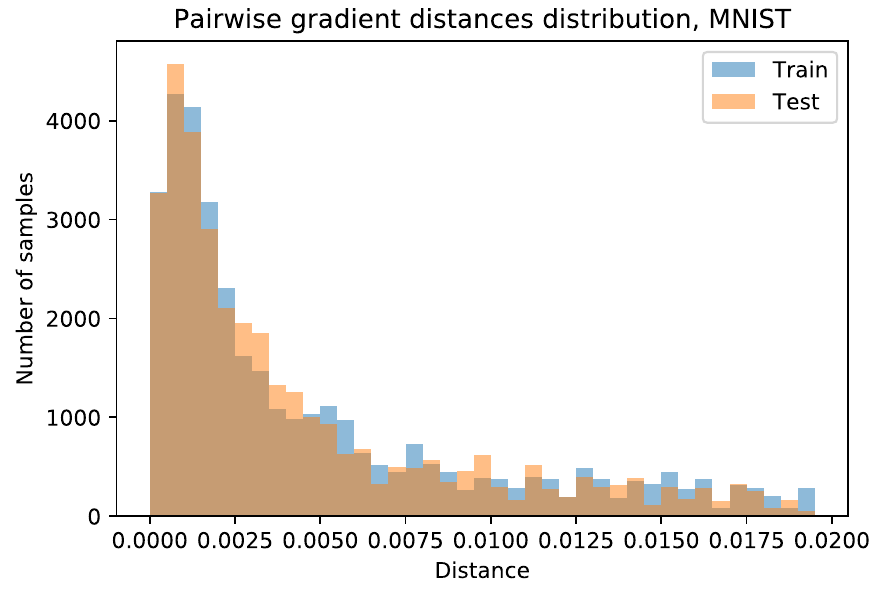}
		\caption{MNIST.}
		\label{fig:grad_hist_mnist}
	\end{subfigure}
	\begin{subfigure}{0.495\linewidth}
		\includegraphics[trim={10pt 0 10pt 0},width=\textwidth]{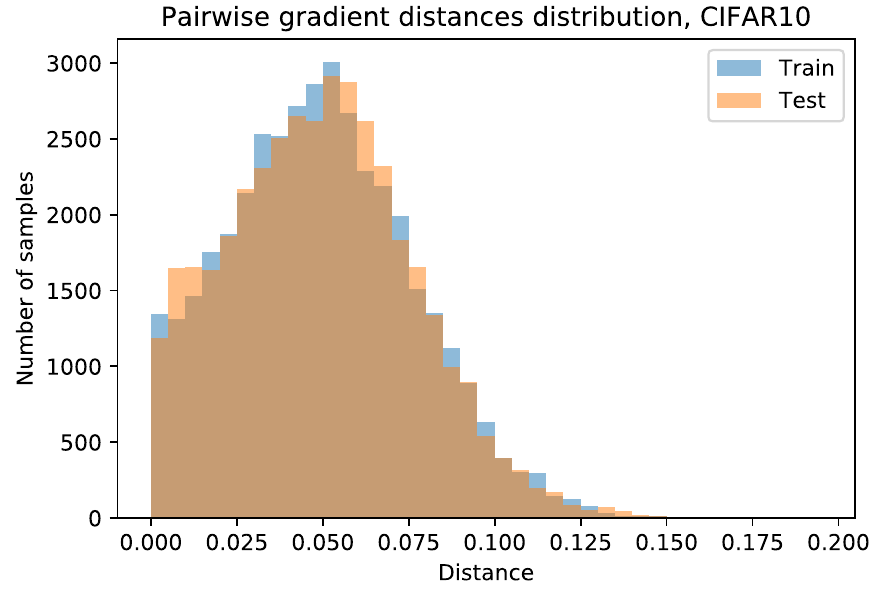}
		\caption{CIFAR10.}
		\label{fig:grad_hist_cifar10}
	\end{subfigure}
	\caption{Pairwise gradient distances distribution.}
\end{figure}

As it turns out, these distributions are nearly identical (see Figures~\ref{fig:grad_hist_mnist} and~\ref{fig:grad_hist_cifar10}), and we do not observe any correlation with the fact of the presence of data in the training set. For example, the sample mean of the test set can be both somewhat higher or lower than that of the train set. We also run the $t$-test for equality of means and Levene's test for equality of variances, obtaining $p$-values well over the $0.05$ threshold, suggesting that the difference of the means and the variances of these distributions is not statistically significant and the equality hypothesis cannot be rejected.

\section{Conclusion}
\label{sec:conclusion}
We introduce the notion of $(\varepsilon_\mu, \delta_\mu)$-Bayesian differential privacy, a variation of $(\varepsilon, \delta)$-differential privacy for sensitive data that are drawn from an arbitrary (and unknown) distribution $\mu(x)$. It is a reasonable assumption in many ML scenarios where models are designed for and trained on specific data distributions (e.g. emails, face images, ECGs, etc.). For example, trying to hide music records in a training set for ECG analysis may be unjustified, because the probability of it appearing is actually much smaller than $\delta$.

We present the advanced composition theorem for Bayesian DP that allows for efficient and tight privacy accounting. Since the data distribution is unknown, we design an estimator that overestimates the privacy loss with high, controllable probability. Moreover, as the data sample is finite, we employ the Bayesian parameter estimation approach with the flat prior and the maximum entropy principle to avoid underestimating probabilities of unseen examples. 

Our evaluation confirms that Bayesian DP is highly beneficial in ML context where its additional assumptions are naturally satisfied. First, it needs less noise for comparable privacy guarantees (with high probability, as per Section~\ref{sec:relation_to_dp}). Second, models train faster and can reach higher accuracy. Third, it may be used along with DP to ensure the worst-case guarantee for out-of-distribution samples and outliers, while providing tighter guarantees for most cases. In our supervised learning experiments, $\varepsilon$ always remains below $1$, translating to much more meaningful bounds on a potential attacker success probability.

\bibliography{icml2020}

\begin{thebibliography}{34}
\providecommand{\natexlab}[1]{#1}
\providecommand{\url}[1]{\texttt{#1}}
\expandafter\ifx\csname urlstyle\endcsname\relax
  \providecommand{\doi}[1]{doi: #1}\else
  \providecommand{\doi}{doi: \begingroup \urlstyle{rm}\Url}\fi

\bibitem[Abadi et~al.(2016)Abadi, Chu, Goodfellow, McMahan, Mironov, Talwar,
  and Zhang]{abadi2016deep}
Abadi, M., Chu, A., Goodfellow, I., McMahan, H.~B., Mironov, I., Talwar, K.,
  and Zhang, L.
\newblock Deep learning with differential privacy.
\newblock In \emph{Proceedings of the 2016 ACM SIGSAC Conference on Computer
  and Communications Security}, pp.\  308--318. ACM, 2016.

\bibitem[Aldous(1985)]{aldous1985exchangeability}
Aldous, D.~J.
\newblock Exchangeability and related topics.
\newblock In \emph{{\'E}cole d'{\'E}t{\'e} de Probabilit{\'e}s de Saint-Flour
  XIII—1983}, pp.\  1--198. Springer, 1985.

\bibitem[Bassily \& Freund(2016)Bassily and Freund]{bassily2016typical}
Bassily, R. and Freund, Y.
\newblock Typical stability.
\newblock \emph{arXiv preprint arXiv:1604.03336}, 2016.

\bibitem[Bassily et~al.(2013)Bassily, Groce, Katz, and
  Smith]{bassily2013coupled}
Bassily, R., Groce, A., Katz, J., and Smith, A.
\newblock Coupled-worlds privacy: Exploiting adversarial uncertainty in
  statistical data privacy.
\newblock In \emph{2013 IEEE 54th Annual Symposium on Foundations of Computer
  Science}, pp.\  439--448. IEEE, 2013.

\bibitem[Bhaskar et~al.(2011)Bhaskar, Bhowmick, Goyal, Laxman, and
  Thakurta]{bhaskar2011noiseless}
Bhaskar, R., Bhowmick, A., Goyal, V., Laxman, S., and Thakurta, A.
\newblock Noiseless database privacy.
\newblock In \emph{International Conference on the Theory and Application of
  Cryptology and Information Security}, pp.\  215--232. Springer, 2011.

\bibitem[Blum et~al.(2013)Blum, Ligett, and Roth]{blum2013learning}
Blum, A., Ligett, K., and Roth, A.
\newblock A learning theory approach to noninteractive database privacy.
\newblock \emph{Journal of the ACM (JACM)}, 60\penalty0 (2):\penalty0 12, 2013.

\bibitem[Bun(2017)]{bun2017teaser}
Bun, M.
\newblock A teaser for differential privacy.
\newblock 2017.

\bibitem[Bun \& Steinke(2016)Bun and Steinke]{bun2016concentrated}
Bun, M. and Steinke, T.
\newblock Concentrated differential privacy: Simplifications, extensions, and
  lower bounds.
\newblock In \emph{Theory of Cryptography Conference}, pp.\  635--658.
  Springer, 2016.

\bibitem[Duan(2009)]{duan2009privacy}
Duan, Y.
\newblock Privacy without noise.
\newblock In \emph{Proceedings of the 18th ACM conference on Information and
  knowledge management}, pp.\  1517--1520. ACM, 2009.

\bibitem[Dwork(2006)]{dwork2006}
Dwork, C.
\newblock Differential privacy.
\newblock In \emph{33rd International Colloquium on Automata, Languages and
  Programming, part II (ICALP 2006)}, volume 4052, pp.\  1--12, Venice, Italy,
  July 2006. Springer Verlag.
\newblock ISBN 3-540-35907-9.
\newblock URL
  \url{https://www.microsoft.com/en-us/research/publication/differential-privacy/}.

\bibitem[Dwork \& Rothblum(2016)Dwork and Rothblum]{dwork2016concentrated}
Dwork, C. and Rothblum, G.~N.
\newblock Concentrated differential privacy.
\newblock \emph{arXiv preprint arXiv:1603.01887}, 2016.

\bibitem[Dwork et~al.(2006)Dwork, McSherry, Nissim, and
  Smith]{dwork2006calibrating}
Dwork, C., McSherry, F., Nissim, K., and Smith, A.
\newblock Calibrating noise to sensitivity in private data analysis.
\newblock In \emph{Theory of cryptography conference}, pp.\  265--284.
  Springer, 2006.

\bibitem[Dwork et~al.(2014)Dwork, Roth, et~al.]{dwork2014algorithmic}
Dwork, C., Roth, A., et~al.
\newblock The algorithmic foundations of differential privacy.
\newblock \emph{Foundations and Trends{\textregistered} in Theoretical Computer
  Science}, 9\penalty0 (3--4):\penalty0 211--407, 2014.

\bibitem[Fredrikson et~al.(2015)Fredrikson, Jha, and
  Ristenpart]{fredrikson2015model}
Fredrikson, M., Jha, S., and Ristenpart, T.
\newblock Model inversion attacks that exploit confidence information and basic
  countermeasures.
\newblock In \emph{Proceedings of the 22nd ACM SIGSAC Conference on Computer
  and Communications Security}, pp.\  1322--1333. ACM, 2015.

\bibitem[Gil et~al.(2013)Gil, Alajaji, and Linder]{gil2013renyi}
Gil, M., Alajaji, F., and Linder, T.
\newblock R{\'e}nyi divergence measures for commonly used univariate continuous
  distributions.
\newblock \emph{Information Sciences}, 249:\penalty0 124--131, 2013.

\bibitem[Hall et~al.(2011)Hall, Rinaldo, and Wasserman]{hall2011random}
Hall, R., Rinaldo, A., and Wasserman, L.
\newblock Random differential privacy.
\newblock \emph{arXiv preprint arXiv:1112.2680}, 2011.

\bibitem[He et~al.(2014)He, Machanavajjhala, and Ding]{he2014blowfish}
He, X., Machanavajjhala, A., and Ding, B.
\newblock Blowfish privacy: Tuning privacy-utility trade-offs using policies.
\newblock In \emph{Proceedings of the 2014 ACM SIGMOD international conference
  on Management of data}, pp.\  1447--1458. ACM, 2014.

\bibitem[Hitaj et~al.(2017)Hitaj, Ateniese, and P{\'e}rez-Cruz]{hitaj2017deep}
Hitaj, B., Ateniese, G., and P{\'e}rez-Cruz, F.
\newblock Deep models under the gan: information leakage from collaborative
  deep learning.
\newblock In \emph{Proceedings of the 2017 ACM SIGSAC Conference on Computer
  and Communications Security}, pp.\  603--618. ACM, 2017.

\bibitem[J{\"a}lk{\"o} et~al.(2016)J{\"a}lk{\"o}, Dikmen, and
  Honkela]{jalko2016differentially}
J{\"a}lk{\"o}, J., Dikmen, O., and Honkela, A.
\newblock Differentially private variational inference for non-conjugate
  models.
\newblock \emph{arXiv preprint arXiv:1610.08749}, 2016.

\bibitem[Kifer \& Machanavajjhala(2014)Kifer and
  Machanavajjhala]{kifer2014pufferfish}
Kifer, D. and Machanavajjhala, A.
\newblock Pufferfish: A framework for mathematical privacy definitions.
\newblock \emph{ACM Transactions on Database Systems (TODS)}, 39\penalty0
  (1):\penalty0 3, 2014.

\bibitem[Kohavi(1996)]{kohavi1996scaling}
Kohavi, R.
\newblock Scaling up the accuracy of naive-bayes classifiers: a decision-tree
  hybrid.
\newblock Citeseer, 1996.

\bibitem[Krizhevsky(2009)]{krizhevsky2009learning}
Krizhevsky, A.
\newblock Learning multiple layers of features from tiny images.
\newblock 2009.

\bibitem[LeCun et~al.(1998)LeCun, Bottou, Bengio, and
  Haffner]{lecun1998gradient}
LeCun, Y., Bottou, L., Bengio, Y., and Haffner, P.
\newblock Gradient-based learning applied to document recognition.
\newblock \emph{Proceedings of the IEEE}, 86\penalty0 (11):\penalty0
  2278--2324, 1998.

\bibitem[Leung \& Lui(2012)Leung and Lui]{leung2012bayesian}
Leung, S. and Lui, E.
\newblock Bayesian mechanism design with efficiency, privacy, and approximate
  truthfulness.
\newblock In \emph{International Workshop on Internet and Network Economics},
  pp.\  58--71. Springer, 2012.

\bibitem[McMahan et~al.(2018)McMahan, Ramage, Talwar, and
  Zhang]{mcmahan2018learning}
McMahan, H.~B., Ramage, D., Talwar, K., and Zhang, L.
\newblock Learning differentially private recurrent language models.
\newblock 2018.

\bibitem[Mironov(2017)]{mironov2017renyi}
Mironov, I.
\newblock Renyi differential privacy.
\newblock In \emph{Computer Security Foundations Symposium (CSF), 2017 IEEE
  30th}, pp.\  263--275. IEEE, 2017.

\bibitem[Oliphant(2006)]{oliphant2006bayesian}
Oliphant, T.~E.
\newblock A bayesian perspective on estimating mean, variance, and
  standard-deviation from data.
\newblock 2006.

\bibitem[Papernot et~al.(2016)Papernot, Abadi, Erlingsson, Goodfellow, and
  Talwar]{papernot2016semi}
Papernot, N., Abadi, M., Erlingsson, U., Goodfellow, I., and Talwar, K.
\newblock Semi-supervised knowledge transfer for deep learning from private
  training data.
\newblock \emph{arXiv preprint arXiv:1610.05755}, 2016.

\bibitem[Papernot et~al.(2018)Papernot, Song, Mironov, Raghunathan, Talwar, and
  Erlingsson]{papernot2018scalable}
Papernot, N., Song, S., Mironov, I., Raghunathan, A., Talwar, K., and
  Erlingsson, {\'U}.
\newblock Scalable private learning with pate.
\newblock \emph{arXiv preprint arXiv:1802.08908}, 2018.

\bibitem[Shokri \& Shmatikov(2015)Shokri and Shmatikov]{shokri2015privacy}
Shokri, R. and Shmatikov, V.
\newblock Privacy-preserving deep learning.
\newblock In \emph{Proceedings of the 22nd ACM SIGSAC conference on computer
  and communications security}, pp.\  1310--1321. ACM, 2015.

\bibitem[Shokri et~al.(2017)Shokri, Stronati, Song, and
  Shmatikov]{shokri2017membership}
Shokri, R., Stronati, M., Song, C., and Shmatikov, V.
\newblock Membership inference attacks against machine learning models.
\newblock In \emph{Security and Privacy (SP), 2017 IEEE Symposium on}, pp.\
  3--18. IEEE, 2017.

\bibitem[Van~Erven \& Harremos(2014)Van~Erven and Harremos]{van2014renyi}
Van~Erven, T. and Harremos, P.
\newblock R{\'e}nyi divergence and kullback-leibler divergence.
\newblock \emph{IEEE Transactions on Information Theory}, 60\penalty0
  (7):\penalty0 3797--3820, 2014.

\bibitem[Waugh(1995)]{waugh1995extending}
Waugh, S.~G.
\newblock \emph{Extending and benchmarking Cascade-Correlation: extensions to
  the Cascade-Correlation architecture and benchmarking of feed-forward
  supervised artificial neural networks}.
\newblock PhD thesis, University of Tasmania, 1995.

\bibitem[Yang et~al.(2015)Yang, Sato, and Nakagawa]{yang2015bayesian}
Yang, B., Sato, I., and Nakagawa, H.
\newblock Bayesian differential privacy on correlated data.
\newblock In \emph{Proceedings of the 2015 ACM SIGMOD international conference
  on Management of Data}, pp.\  747--762. ACM, 2015.

\end{thebibliography}
\bibliographystyle{icml2020}

\newpage

\appendix

\section*{Appendix}


\section{Proofs}

\subsection{Proofs of Propositions}
\label{app:propositions}

\weakBDP*
\begin{proof}
Let us define a set of outcomes for which the privacy loss variable exceeds the $\varepsilon$ threshold: $F(x') = \{ w : L_\mathcal{A}(w, D, D') > \varepsilon \}$, and its compliment $F^c(x')$.

We have,
\begin{align}
	\Pr[\mathcal{A}(D) \in \mathcal{S}] &= \int \Pr[\mathcal{A}(D) \in \mathcal{S}, x']~dx' \\
		&= \int \Pr[\mathcal{A}(D) \in \mathcal{S} \cap \mathcal{F}^c(x'), x'] \\
		&\qquad + \Pr[\mathcal{A}(D) \in \mathcal{S} \cap \mathcal{F}(x'), x']~dx' \\
		&= \int \Pr[\mathcal{A}(D) \in \mathcal{S} \cap \mathcal{F}^c(x') | x'] \mu(x') \\
		&\qquad	+ \Pr[\mathcal{A}(D) \in \mathcal{S} \cap \mathcal{F}(x'), x']~dx' \\
		&\leq \int e^\varepsilon \Pr[\mathcal{A}(D') \in \mathcal{S} \cap \mathcal{F}^c(x') | x'] \mu(x') \\
		&\qquad	+ \Pr[\mathcal{A}(D) \in \mathcal{S} \cap \mathcal{F}(x'), x']~dx' \\
		&\leq \int e^\varepsilon\Pr[\mathcal{A}(D') \in \mathcal{S}, x'] \\
		&\qquad	+ \Pr[\mathcal{A}(D) \in \mathcal{S} \cap \mathcal{F}(x'), x']~dx' \\
		&\leq e^\varepsilon\Pr[\mathcal{A}(D') \in \mathcal{S}] + \delta_\mu,
\end{align}
where we used the observation that $L \leq \varepsilon$ implies $\Pr[\mathcal{A}(D) \in \mathcal{S} \cap \mathcal{F}^c(x')] \leq e^\varepsilon \Pr[\mathcal{A}(D') \in \mathcal{S} \cap \mathcal{F}^c(x')]$, and therefore, $\Pr[\mathcal{A}(D) \in \mathcal{S} \cap \mathcal{F}^c(x') ~|~ x'] \leq e^\varepsilon \Pr[\mathcal{A}(D') \in \mathcal{S} \cap \mathcal{F}^c(x') ~|~ x']$, because $\mathcal{A}(D)$ does not depend on $x'$, and $\mathcal{A}(D')$ is already conditioned on $x'$ through $D'$. Additionally, in the first line we used marginalisation, and the last inequality is due to the fact that
\begin{align}
	\int &\Pr[\mathcal{A}(D) \in \mathcal{S} \cap \mathcal{F}(x'), x']~dx' \\
		&\leq \int \Pr[\mathcal{A}(D) \in \mathcal{F}(x'), x']~dx' \\
		&= \int \mu(x') \Pr[\mathcal{A}(D) \in \mathcal{F}(x') ~|~ x']~dx' \\
		&= \int \mu(x') \int_{w \in \mathcal{F}(x')} p_\mathcal{A}(w|D, x') ~dw~dx' \\
		&= \mathbb{E}_{x'} \left[ \mathbb{E}_w \left[ \mathds{1}\{L > \varepsilon\} \right] \right] \\
		&\leq \delta_\mu
\end{align}
\end{proof}

\begin{proposition}[Post-processing]
\label{prop:postprocessing}
Let $\mathcal{A}: \mathcal{D} \rightarrow \mathcal{R}$ be a $(\varepsilon_\mu, \delta_\mu)$-Bayesian differentially private algorithm. Then for any arbitrary randomised data-independent mapping $f : \mathcal{R} \rightarrow \mathcal{R}'$, $f(\mathcal{A}(D))$ is $(\varepsilon_\mu, \delta_\mu)$-Bayesian differentially private.
\end{proposition}
\begin{proof}
First, by Proposition~\ref{prop:weakBDP}, $(\varepsilon_\mu, \delta_\mu)$-strong BDP implies the weak sense of BDP:
\begin{align}
	\Pr\left[\mathcal{A}(D) \in \mathcal{S} \right] \leq e^{\varepsilon_\mu} \Pr\left[\mathcal{A}(D') \in \mathcal{S} \right] + \delta_\mu,
\end{align}
for any set of outcomes $\mathcal{S} \subset \mathcal{R}$.

For a data-independent function $f(\cdot)$:
\begin{align}
	\Pr\left[f(\mathcal{A}(D)) \in \mathcal{T} \right] &= \Pr\left[\mathcal{A}(D) \in \mathcal{S} \right] \\
		&\leq e^{\varepsilon_\mu} \Pr\left[\mathcal{A}(D') \in \mathcal{S} \right] + \delta_\mu, \\
		&= e^{\varepsilon_\mu} \Pr\left[f(\mathcal{A}(D')) \in \mathcal{T} \right] + \delta_\mu
\end{align}
where $\mathcal{S} = f^{-1}[\mathcal{T}]$, i.e. $\mathcal{S}$ is the preimage of $\mathcal{T}$ under $f$.
\end{proof}

\begin{proposition}[Basic composition]
\label{prop:basiccomposition}
Let $\mathcal{A}_i : \mathcal{D} \rightarrow \mathcal{R}_i$, $\forall i=1..k$, be a sequence of  $(\varepsilon_\mu, \delta_\mu)$-Bayesian differentially private algorithms. Then their combination, defined as $\mathcal{A}_{1:k} : \mathcal{D} \rightarrow \mathcal{R}_1 \times \ldots \times \mathcal{R}_k$, is $(k\varepsilon_\mu, k\delta_\mu)$-Bayesian differentially private.
\end{proposition}
\begin{proof}
Let us denote $L = \log\frac{p(w_1, \ldots, w_k | D)}{p(w_1, \ldots, w_k | D')}$.

Also, let $L_i = \log\frac{p(w_i | D, w_{i-1}, \ldots, w_1)}{p(w_i | D', w_{i-1}, \ldots, w_1)}$. Then,
\begin{align}
	\Pr\left[ L \geq k\varepsilon_\mu \right] &= \Pr\left[ \sum_{i=1}^k L_i \geq k\varepsilon_\mu \right] \\
		&\leq \sum_{i=1}^k \Pr[L_i \geq \varepsilon_\mu] \\
		&\leq \sum_{i=1}^k \delta_\mu \\
		&\leq k\delta_\mu
\end{align}

For the weak sense of BDP, the proof follows the steps of \citet[Appendix~B]{dwork2014algorithmic}.

\end{proof}

\begin{proposition}[Group privacy]
\label{prop:group}
Let $\mathcal{A}: \mathcal{D} \rightarrow \mathcal{R}$ be a $(\varepsilon_\mu, \delta_\mu)$-Bayesian differentially private algorithm. Then for all pairs of datasets $D, D' \in \mathcal{D}$, differing in $k$ data points $x_1, \ldots, x_k$ s.t. $x_i \sim \mu(x)$ for $i=1..k$, $\mathcal{A}(D)$ is $(k\varepsilon_\mu, ke^{k\varepsilon_\mu}\delta_\mu)$-Bayesian differentially private.
\end{proposition}
\begin{proof}
Let us define a sequence of datasets $D^{i}$, $i=1..k$, s.t. $D = D^{0}$, $D' = D^{k}$, and $D^{i}$ and $D^{i-1}$ differ in a single example. Then,
\begin{align}
	&\frac{p(w | D)}{p(w | D')} = \frac{p(w | D^{0}) p(w | D^{1}) \ldots p(w | D^{k-1})}{p(w | D^{1}) p(w | D^{2}) \ldots p(w | D^{k})}
\end{align}
Denote $L_i = \log\frac{p(w | D^{i-1})}{p(w | D^{i})}$ for $i = 1..k$.

Finally, applying the definition of $(\varepsilon_\mu, \delta_\mu)$-Bayesian differential privacy,
\begin{align}
	 \Pr\left[ L \geq k\varepsilon_\mu \right] &= \Pr\left[ \sum_{i=1}^k L_i \geq k\varepsilon_\mu \right] \\
	 	&\leq \sum_{i=1}^k \Pr[L_i \geq \varepsilon_\mu] \\
		&\leq k\delta_\mu
\end{align}

For the weak sense of BDP,
\begin{align}
	 Pr\left[\mathcal{A}(D) \in \mathcal{S} \right] &\leq e^{\varepsilon_\mu} Pr\left[\mathcal{A}(D^1) \in \mathcal{S} \right] + \delta_\mu \\
	 	&\leq e^{\varepsilon_\mu} \left( e^{\varepsilon_\mu} Pr\left[\mathcal{A}(D^2) \in \mathcal{S} \right] + \delta_\mu \right) + \delta_\mu \\
	 	&\leq e^{2\varepsilon_\mu} Pr\left[\mathcal{A}(D^2) \in \mathcal{S} \right] + e^{\varepsilon_\mu}\delta_\mu + \delta_\mu \\
	 	&\leq \ldots \\
	 	&\leq e^{k\varepsilon_\mu} \Pr\left[\mathcal{A}(D^k) \in \mathcal{S} \right] + \frac{e^{k\varepsilon_\mu} - 1}{e^{\varepsilon_\mu} - 1} \delta_\mu \label{eq:delta_geometric_progression} \\
	 	&\leq e^{k\varepsilon_\mu} \Pr\left[\mathcal{A}(D^k) \in \mathcal{S} \right] + \frac{k\varepsilon_\mu e^{k\varepsilon_\mu}}{\varepsilon_\mu} \delta_\mu \label{eq:bound_exp_minus_1} \\
	 	&\leq e^{k\varepsilon_\mu} \Pr\left[\mathcal{A}(D') \in \mathcal{S} \right] + k e^{k\varepsilon_\mu} \delta_\mu,
\end{align}
where in \eqref{eq:delta_geometric_progression} we use the formula for the sum of a geometric progression; in \eqref{eq:bound_exp_minus_1}, the facts that $e^x - 1 \leq xe^x$, for $x > 0$, and $e^x \geq x + 1$.
\end{proof}

\subsection{Proof of Theorem~\ref{thm:advanced_composition}}
\label{app:composition}
Let us restate the theorem:
\composition*
\begin{proof}
The proof closely follows~\cite{abadi2016deep}.

First, we can write 
\begin{align}
	 L^{(1:T)} &= \log \frac{p(w^{(1)} \ldots w^{(T)}~|~D)}{p(w^{(1)} \ldots w^{(T)}~|~D')} \\
	 	&= \log \prod_{t=1}^T \frac{ p(w^{(t)}~|~w^{(t-1)} \ldots p(w^{(1)}, D)}{p(w^{(t)}~|~w^{(t-1)} \ldots p(w^{(1)}, D')} \\
	 	&= \log \prod_{t=1}^T \frac{ p(w^{(t)}~|~w^{(t-1)}, D) }{ p(w^{(t)}~|~w^{(t-1)}, D')} \\
	 	&= \sum_{t=1}^T L^{(t)}
\end{align}

Unlike the composition proof of the moments accountant by \citet{abadi2016deep}, we cannot simply swap the product and the expectation in our proof, because the additional example $x'$ remains the same in all applications of the privacy mechanism and probability distributions will not be independent. However, we can use generalised H\"older's inequality:
\begin{align}
\label{eq:holder}
	\left\| \prod_{t=1}^T f_t \right\|_r \leq \prod_{t=1}^T \| f_t \|_{p_t},
\end{align}
where $p_t$ are such that $\sum_{t=1}^T \frac{1}{p_t} = \frac{1}{r}$, and $\| f \|_r = \left( \int_S |f|^r dx \right)^{1/r}$. 

Choosing $r=1$ and $p_t = T$:
\begin{align}
	\mathbb{E}\left[e^{\lambda L^{1:T}}\right] 
		&= \mathbb{E}\left[ \prod_{t=1}^T e^{\lambda \log \frac{ p(w^{(t)}~|~w^{(t-1)}, X) }{ p(w^{(t)}~|~w^{(t-1)}, X')} } \right] \\
		&= \mathbb{E}_x\left[ \mathbb{E}_w \left[ \left. \prod_{t=1}^T e^{\lambda \log \frac{ p(w^{(t)}~|~w^{(t-1)}, X) }{ p(w^{(t)}~|~w^{(t-1)}, X')} } ~\right|~ x' \right] \right] \label{eq:expectation_split} \\
		&= \mathbb{E}_x\left[ \prod_{t=1}^T \mathbb{E}_w \left[ \left. e^{\lambda \log \frac{p(w^{(t)}~|~w^{(t-1)}, X)}{p(w^{(t)}~|~w^{(t-1)}, X')} } ~\right|~ x' \right] \right] \label{eq:noise_product_swap} \\
		&= \mathbb{E}_x\left[ \prod_{t=1}^T e^{\lambda \mathcal{D}_{\lambda+1} (p_t \| q_t)} \right] \\
		&\leq \prod_{t=1}^T \mathbb{E}_x \left[ e^{T\lambda \mathcal{D}_{\lambda+1} (p_t \| q_t)} \right]^{\frac{1}{T}}, \label{eq:data_product_swap}
\end{align}
where \eqref{eq:expectation_split} is by the law of total expectation; \eqref{eq:noise_product_swap} is due to independence of noise between iterations, similarly to~\cite{abadi2016deep}; and \eqref{eq:data_product_swap} is by H\"older's inequality.
\end{proof}

\stepcounter{theorem}

\subsection{Proof of Theorem~\ref{thm:gauss_privacy_cost}}
\label{app:gaussian}
Let us restate the theorem:
\gaussian*
\begin{proof}
Without loss of generality, assume $D' = D \cup \{x'\}$. For brevity, let $d_t = \|g_t - g_t'\|$.

Let us first consider $\mathcal{D}_{\lambda+1} (p(w|D') \| p(w|D))$:
\begin{align}
	\mathbb{E}&\left[\left(\frac{p(w|D')}{p(w|D)}\right)^{\lambda+1} \right]  \nonumber \\
		&= \mathbb{E}\left[\left(\frac{(1-q) \mathcal{N}(0, \sigma^2) + q \mathcal{N}(d_t, \sigma^2)}{\mathcal{N}(0, \sigma^2)}\right)^{\lambda+1} \right] \\
		&= \mathbb{E}\left[\left( (1-q) + q \frac{\mathcal{N}(d_t, \sigma^2)}{\mathcal{N}(0, \sigma^2)} \right)^{\lambda+1} \right] \\
		&= \mathbb{E}\left[\left( (1-q) + q e^\frac{(w - d_t)^2 - w^2}{2\sigma^2} \right)^{\lambda+1} \right] \\
		&= \mathbb{E}\left[\left( (1-q) + q e^\frac{2dw - d_t^2}{2\sigma^2} \right)^{\lambda+1} \right] \\
		&= \mathbb{E}\left[\sum_{k=0}^{\lambda+1} \binom{\lambda + 1}{k} q^k (1-q)^{\lambda+1-k} e^\frac{2d_t kw - kd_t^2}{2\sigma^2} \right] \label{eq:binomial} \\
		&= \sum_{k=0}^{\lambda+1} \binom{\lambda + 1}{k} q^k (1-q)^{\lambda+1-k} \mathbb{E}\left[ e^\frac{2d_t kw - kd_t^2}{2\sigma^2} \right] \label{eq:binomial_independence} \\
		&= \sum_{k=0}^{\lambda+1} \binom{\lambda + 1}{k} q^k (1-q)^{\lambda+1-k} e^{\frac{k^2 - k}{2\sigma^2} d_t^2} \label{eq:gaussian_expectation_property} \\
		&= \mathbb{E}_{k \sim B(\lambda+1, q)} \left[e^{\frac{k^2 - k}{2\sigma^2} \|g_t - g'_t\|^2} \right],
\end{align}
Here, in~\eqref{eq:binomial} we used the binomial expansion, in~\eqref{eq:binomial_independence} the fact that the factors in front of the exponent do not depend on $w$, and in~\eqref{eq:gaussian_expectation_property} the property $\mathbb{E}_w\left[\exp(2aw/(2\sigma^2))\right] = \exp(a^2/(2\sigma^2))$ for $w\sim \mathcal{N}(0,\sigma^2)$. Plugging the above in the privacy cost formula (Eq.~10 in the main paper), we get the expression for $c_t^{L}(\lambda)$.

Computing $\mathcal{D}_{\lambda+1} (p(w|D) \| p(w|D'))$ is a little more challenging. Let us first change to $\mathcal{D}_{\lambda} (p(w|D) \| p(w|D'))$, so that the expectation is taken over $\mathcal{N}(0, \sigma^2)$. Then, we can bound it observing that $f(x) = \frac{1}{x}$ is convex for $x > 0$ and using the definition of convexity, and apply the same steps as above:
\begin{align}
	\mathbb{E}&\left[\left(\frac{p(w|D)}{p(w|D')}\right)^{\lambda} \right]  \nonumber \\
		&= \mathbb{E}\left[\left(\frac{\mathcal{N}(0, \sigma^2)}{(1-q) \mathcal{N}(0, \sigma^2) + q \mathcal{N}(d_t, \sigma^2)}\right)^{\lambda} \right] \\
		&\leq \mathbb{E}\left[\left( (1-q) + q e^\frac{d_t^2 - 2dw}{2\sigma^2} \right)^{\lambda} \right] \\
		&= \mathbb{E}_{k \sim B(\lambda, q)} \left[e^{\frac{k^2 + k}{2\sigma^2} \|g_t - g'_t\|^2} \right]
\end{align}
In practice, we haven't found any instance of $\mathcal{D}_{\lambda+1} (p(w|D') \| p(w|D)) < \mathcal{D}_{\lambda+1} (p(w|D) \| p(w|D'))$ when the latter was computed using numerical integration, although it may happen when using this theoretical upper bound.
\end{proof}

\subsection{Proof of Theorem~\ref{thm:estimator}}
\label{app:estimator}
Let us restate the theorem:
\estimator*
\begin{proof}
First of all, we can drop the logarithm from our consideration because of its monotonicity.

Now, assuming that samples $e^{\lambda \hat{D}_{\lambda+1}^{(t)}}$ have a common mean and a common variance, and applying the maximum entropy principle in combination with an uninformative (flat) prior, one can show that the quantity $\frac{M(t) - \mathbb{E}\left[e^{\lambda \hat{D}_{\lambda+1}^{(t)}}\right]}{S(t)} \sqrt{m - 1}$ follows the Student's $t$-distribution with $m - 1$ degrees of freedom~\cite{oliphant2006bayesian}.

Finally, we use the inverse of the Student's $t$ CDF to find the value that this random variable would only exceed with probability $\gamma$. The result follows by simple arithmetical operations.
\end{proof}

\section{Evaluation}

\subsection{Experimental setting}
\label{app:setting}
All experiments were performed on a machine with Intel Xeon E5-2680 (v3), 256 GB of RAM, and two NVIDIA TITAN X graphics cards. We train a classifier represented by a neural network on MNIST~\cite{lecun1998gradient} and on CIFAR10~\cite{krizhevsky2009learning} using DP-SGD. The first dataset contains 60,000 training examples and 10,000 testing images. We use large batch sizes of $1024$, clip gradient norms to $C=1$, and $\sigma=0.1$. We also experimented with the idea of dropping updates for a random subset of weights, and achieved the best performance with updating $10\%$ of weights at each iteration. The second dataset consists of 50,000 training images and 10,000 testing images of objects split in 10 classes. For this dataset, we use the batch size of $512$, $C=1$, and $\sigma=0.8$. We fix $\delta=10^{-5}$ in all experiments, and $\delta_\mu=10^{-10}$ to achieve $(\varepsilon, 10^{-5})$ bound for $99.999\%$ of data distribution using Markov inequality. 

MNIST experiments are performed with the CNN model from Tensorflow tutorial (the same as in~\cite{abadi2016deep}, except we do not use PCA), trained using SGD with the learning rate 0.02. In case of CIFAR10, in order for our results to be comparable to~\citep{abadi2016deep}, we pre-train convolutional layers of the model on a different dataset and retrain a fully-connected layer in a privacy-preserving way. We were unable to reproduce the experiment exactly as specified in~\citep{abadi2016deep} and chose a different model (VGG-16 pre-trained on ImageNet), guided by maintaining a similar or lower non-private accuracy. The model was trained using Adam with the learning rate of 0.001. Since the goal of these experiments is to show relative performance of private methods, we did not perform an exhaustive search for hyperparameters, either using default or previously published values or values that yield reasonable training behaviour.

Privacy accounting with DP-SGD works in the following way. The non-private learning outcome at each iteration $t$ is the gradient $g_t$ of the loss function w.r.t. the model parameters, the outcome distribution is the Gaussian $\mathcal{N}(g_t, \sigma^2 C^2)$. Before adding noise, the norm of the gradients is clipped to $C$. For the moments accountant, the privacy loss is calculated using this $C$ and $\sigma$. For the Bayesian accountant, either pairs of examples $x_i, x_j$ or pairs of batches are sampled from the dataset at each iteration, and used to compute $\hat{c}_t(\lambda)$. Although clipping gradients is no longer necessary with the Bayesian accountant, it is highly beneficial for incurring lower privacy loss at each iteration and obtaining tighter composition. Moreover, it ensures the classic DP bounds on top of BDP bounds.

We also run evaluation on two binary classification tasks taken from UCI database: Abalone~\cite{waugh1995extending} (predicting the age of abalone from physical measurements) and Adult~\cite{kohavi1996scaling} (predicting income based on a person's attributes). In this setting, we compare differentially private variational inference (DPVI-MA~\cite{jalko2016differentially}) to the variational inference with BDP. The datasets have 4,177 and 48,842 examples with 8 and 14 attributes accordingly. We use the same pre-processing and models as~\cite{jalko2016differentially}. We run experiments using the authors original implementation\footnote{\url{https://github.com/DPBayes/DPVI-code}} with slight modifications (e.g. accounting randomness of sampling from variational distributions, instead of adding noise, using Bayesian accountant, and performing classification with variational samples instead of optimal variational parameters).

\subsection{Effect of $\sigma$ and bounded sensitivity}
\label{sec:sigma}

\begin{figure*}
	\centering
	\begin{subfigure}{0.3\textwidth}
    		\includegraphics[width=\textwidth]{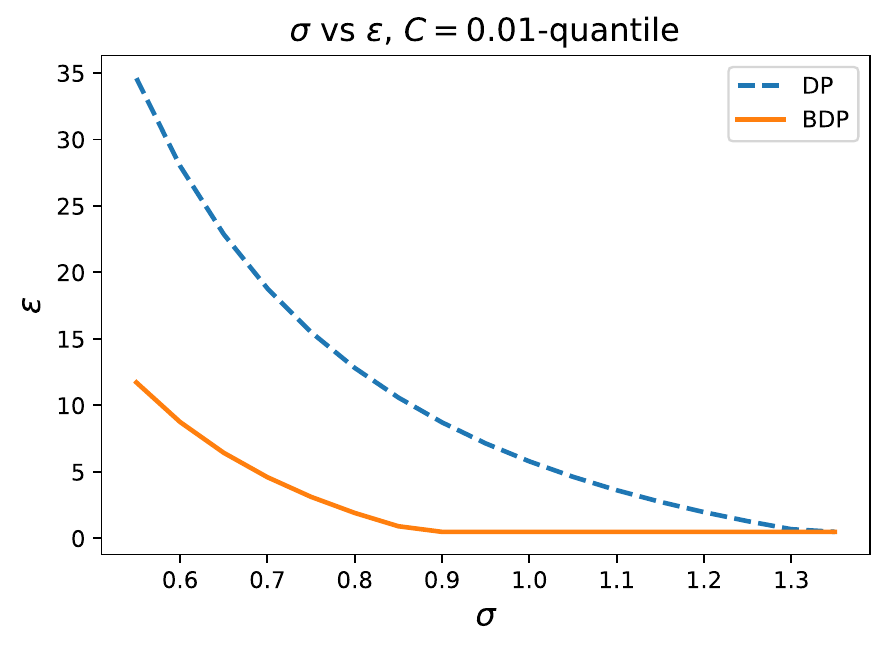}
    		\caption{Clipping at $0.01$-quantile of $\|\nabla f\|$.}
    		\label{fig:eps_sigma_with_clipping_a}
	\end{subfigure}
	\begin{subfigure}{0.3\textwidth}
    		\includegraphics[width=\textwidth]{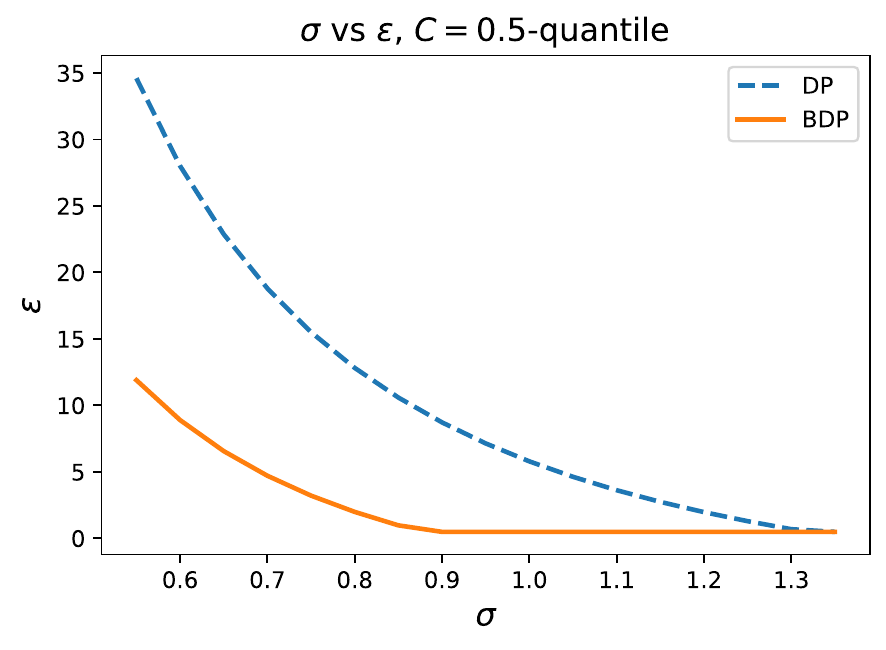}
    		\caption{Clipping at $0.50$-quantile of $\|\nabla f\|$.}
	\end{subfigure}
	\begin{subfigure}{0.3\textwidth}
    		\includegraphics[width=\textwidth]{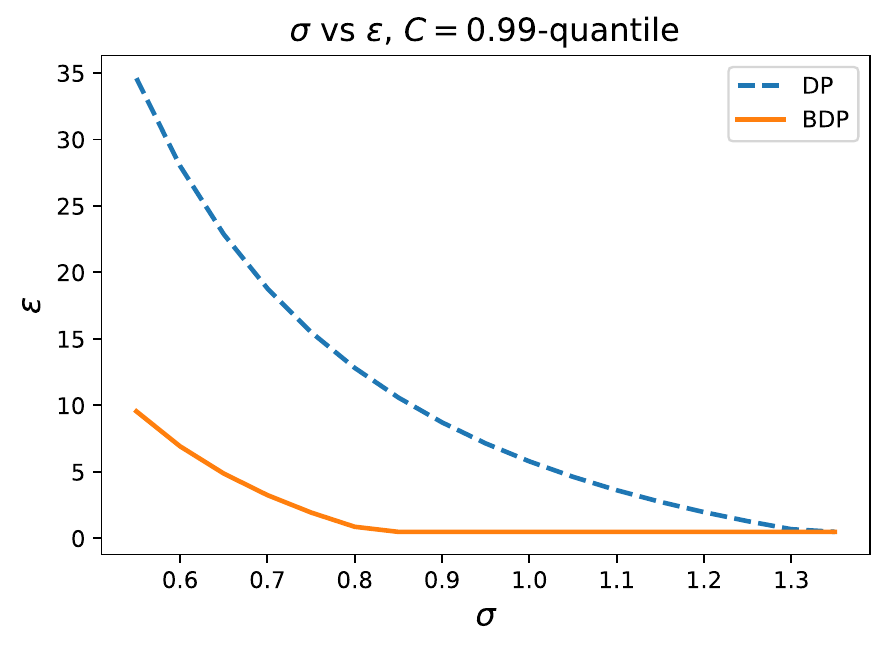}
    		\caption{Clipping at $0.99$-quantile of $\|\nabla f\|$.}
	\end{subfigure}
	\caption{Dependency between $\sigma$ and $\varepsilon$ for different $C$ when clipping for both DP and BDP.}
	\label{fig:eps_sigma_with_clipping}
\end{figure*}

\begin{figure*}
	\centering
	\begin{subfigure}{0.3\textwidth}
    		\includegraphics[width=\textwidth]{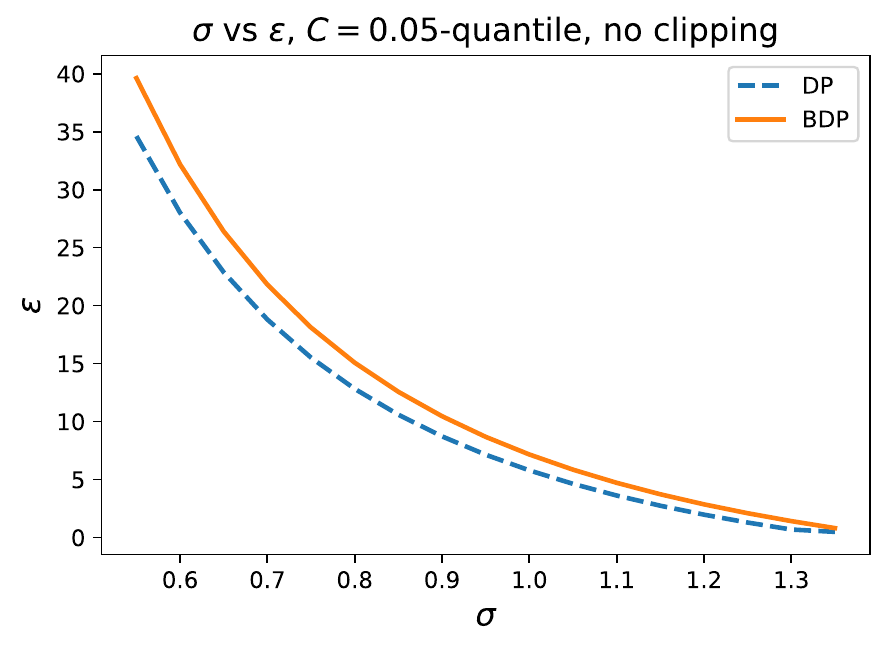}
    		\caption{Noise at $0.05$-quantile of $\|\nabla f\|$.}
	\end{subfigure}
	\begin{subfigure}{0.3\textwidth}
    		\includegraphics[width=\textwidth]{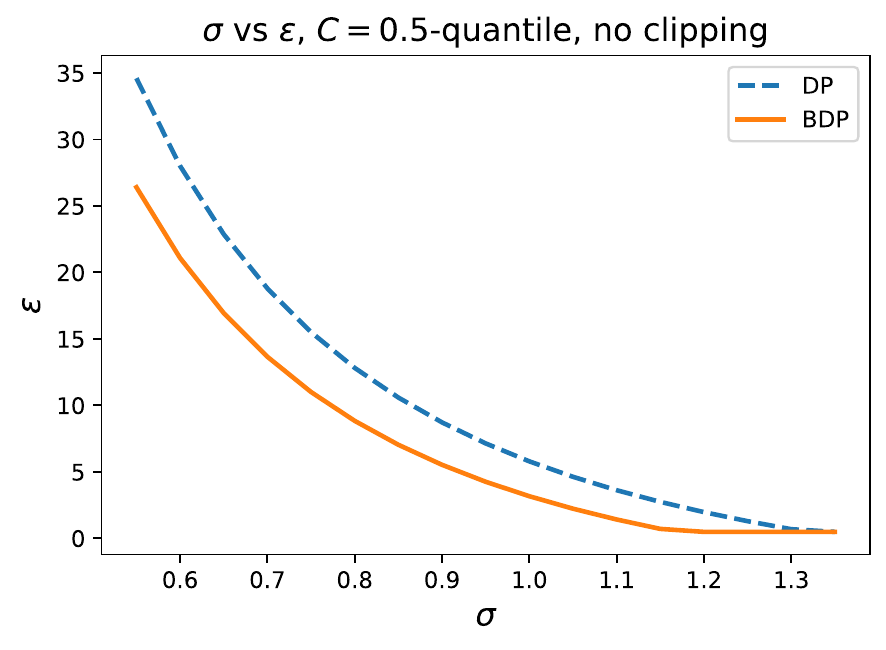}
    		\caption{Noise at $0.50$-quantile of $\|\nabla f\|$.}
	\end{subfigure}
	\begin{subfigure}{0.3\textwidth}
    		\includegraphics[width=\textwidth]{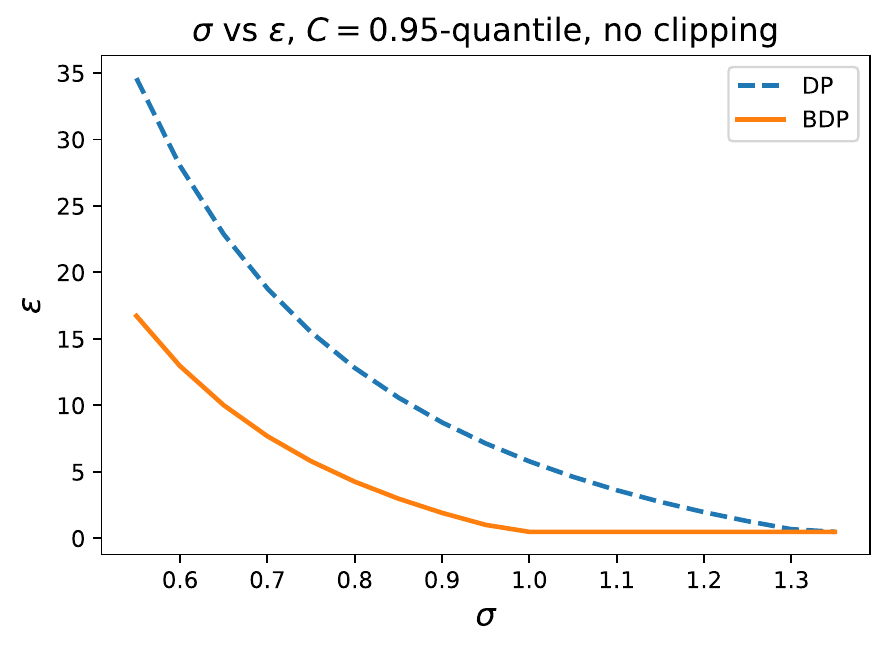}
    		\caption{Noise at $0.95$-quantile of $\|\nabla f\|$.}
	\end{subfigure}
	\caption{Dependency between $\sigma$ and $\varepsilon$ for different $C$ when clipping for DP and not clipping for BDP.}
	\label{fig:eps_sigma_no_clipping}
\end{figure*}

The primary goal of our paper is to obtain more meaningful privacy guarantees sacrificing as little utility as possible. The main factor in the loss of utility is the variance of the noise we add during training. Therefore it is critical to examine how our guarantee behaves compared to the classic DP for the same amount of noise. Or equivalently, how much noise does it require to reach the same $\varepsilon$.

As stated above, there are two possible regimes of operation for the Gaussian noise mechanism under Bayesian differential privacy: with bounded sensitivity and with unbounded sensitivity. The first is just like the classic DP: there is a maximum bound on the contribution of an individual example, and the noise is scaled to it. The second does not have a bound on contribution and mitigates it by taking into account the low probability of extreme contributions.  

Figures~\ref{fig:eps_sigma_with_clipping} and~\ref{fig:eps_sigma_no_clipping} demonstrate the dependency between $\sigma$ and $\varepsilon$ for different clipping thresholds $C$ chosen relative to the quantiles of the gradient norm distribution. If we bound sensitivity by clipping the gradients, it ensures that BDP always requires less noise than DP to reach the same $\varepsilon$, as seen in Figure~\ref{fig:eps_sigma_with_clipping}. As we decrease the clipping threshold $C$, more and more gradients get clipped and the BDP curve approaches the DP curve (Figure~\ref{fig:eps_sigma_with_clipping_a}). However, as we observe in Figure~\ref{fig:eps_sigma_no_clipping} comparing DP with bounded sensitivity and BDP with unbounded sensitivity, using unclipped gradients results in less consistent behaviour. It may require a more thorough search for the right noise variance to reach the same $\varepsilon$.

\subsection{Effect of $\lambda$}
\label{sec:lambda}

As mentioned in Section~4.2, the privacy cost, and therefore the final value of $\varepsilon$, depend on the choice of $\lambda$. We run the Bayesian accountant for the Gaussian mechanism with the fixed pairwise gradient distances (s.t. these results apply exactly to the moments accountant) for different signal-to-noise ratios and different $\lambda$.

Depicted in Figure~\ref{fig:lambda_eps_c} is $\varepsilon$ as a function of $\lambda$ for 10000 steps. We observe that $\lambda$ has a clear effect on the final $\varepsilon$ value. In some cases this effect is very significant and the change is sharp. It suggests that in practice one should be careful about the choice of $\lambda$. We also note that for lower signal-to-noise ratios (e.g. $C=0.1, \sigma=1$) the optimal choice of $\lambda$ is much further on the real line and may well be outside the typically range computed in the literature.

\begin{figure}
\centering
\includegraphics[width=\linewidth]{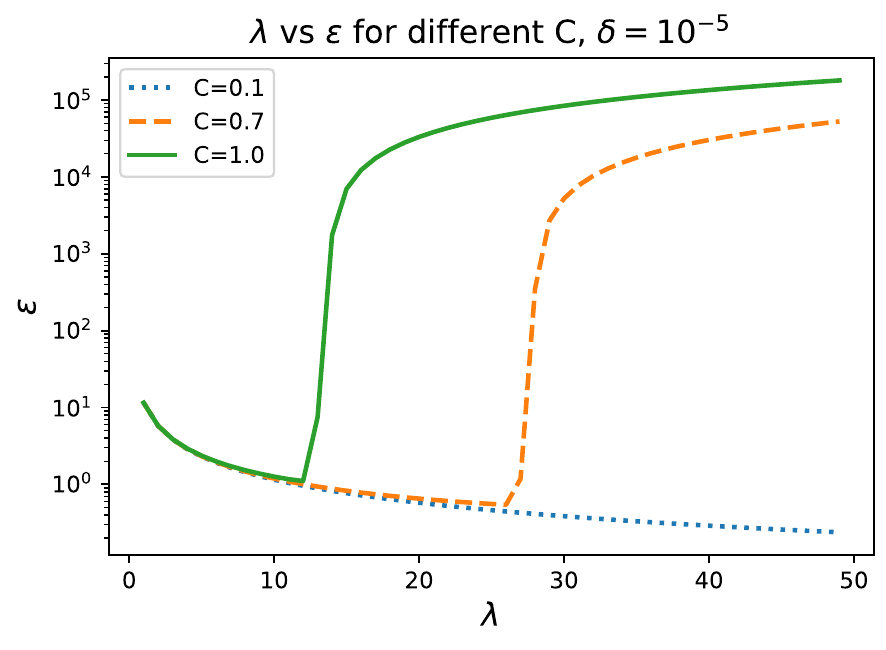}
\caption{Dependency of $\lambda$ and $\varepsilon$ for different clipping thresholds $C$, $q = 64 / 60000$, $\sigma=1.0$.}
\label{fig:lambda_eps_c}
\end{figure}

\end{document}